\newif\ifanon
\newif\ifsup
\newcommand{\todoc}[2][]{\todo[size=\scriptsize,color=blue!20!white,#1]{Csaba: #2}}
\newcommand{\todot}[2][]{\todo[size=\scriptsize,color=red!20!white,#1]{Tor: #2}}
\newcommand{\smallsc}[1]{\text{\scalebox{0.9}{\scshape #1}}}
\let\epsilon\varepsilon
\theoremstyle{plain}
\newtheorem{theorem}{Theorem}
\newtheorem{lemma}[theorem]{Lemma}
\newtheorem{corollary}[theorem]{Corollary}
\theoremstyle{definition}
\newtheorem{definition}[theorem]{Definition}
\newtheorem{remark}[theorem]{Remark}
\newtheorem{example}[theorem]{Example}
\theoremstyle{remark}
\newcommand\numberthis{\addtocounter{equation}{1}\tag{\theequation}}
\newcommand{\E}{\mathbb E}
\newcommand{\Ep}{\E'}
\newcommand{\EE}[1]{\E\left[#1\right]}
\newcommand{\EEp}[1]{\E'\left[#1\right]}
\newcommand{\Var}{\operatorname{Var}}
\newcommand{\set}[1]{\left\{#1\right\}}
\newcommand{\ind}[1]{\mathds{1}\!\set{#1}}
\newcommand{\argmax}{\operatornamewithlimits{arg\,max}}
\newcommand{\ceil}[1]{\left \lceil {#1} \right\rceil}
\newcommand{\eqn}[1]{\begin{align}#1\end{align}}
\newcommand{\eq}[1]{\begin{align*}#1\end{align*}}
\newcommand{\norm}[1]{\left\Vert #1 \right\Vert}
\newcommand{\shortnorm}[1]{\Vert #1 \Vert}
\newcommand{\R}{\mathbb R}
\newcommand{\N}{\mathbb N}
\newcommand{\inner}[1]{\left<#1\right>}
\newcommand{\shortinner}[1]{\langle #1 \rangle}
\newcommand{\ip}[1]{\shortinner{#1}}
\renewcommand{\P}[1]{\mathbb P \left(#1\right)}
\newcommand{\Pp}[1]{\mathbb P' \left(#1 \right)}
\newcommand{\KL}{\operatorname{KL}}
\newcommand{\laspan}{\operatorname{span}}
\newcommand{\trace}{\operatorname{tr}}
\def\ddefloop#1{\ifx\ddefloop#1\else\ddef{#1}\expandafter\ddefloop\fi}
\def\ddef#1{\expandafter\def\csname b#1\endcsname{\ensuremath{\mathbf{#1}}}}
\def\ddef#1{\expandafter\def\csname bb#1\endcsname{\ensuremath{\mathbb{#1}}}}
\def\ddef#1{\expandafter\def\csname c#1\endcsname{\ensuremath{\mathcal{#1}}}}
\def\ddef#1{\expandafter\def\csname v#1\endcsname{\ensuremath{\boldsymbol{#1}}}}
\def\ddef#1{\expandafter\def\csname v#1\endcsname{\ensuremath{\boldsymbol{\csname #1\endcsname}}}}
\begin{document}

\twocolumn[

\aistatstitle{The End of Optimism? \\ An Asymptotic Analysis of Finite-Armed Linear Bandits}

\ifanon
\aistatsauthor{Anonymous Author 1 \And Anonymous Author 2}

\aistatsaddress{Unknown Institution 1 \And Unknown Institution 2}
\else

\aistatsauthor{Tor Lattimore \And Csaba Szepesv\'ari}

\aistatsaddress{Indiana University, Bloomington \And University of Alberta, Edmonton}

\fi


]

\begin{abstract}
Stochastic linear bandits are a natural and simple generalisation of finite-armed bandits with numerous practical applications. 
Current approaches focus on generalising 
existing techniques for finite-armed bandits, notably the optimism principle and Thompson sampling. While prior work
has mostly been in the worst-case setting, we analyse the asymptotic instance-dependent regret and show matching upper and lower bounds on what is achievable. Surprisingly, our
results show that no algorithm based on optimism or Thompson sampling will ever achieve the optimal rate, and indeed, can be arbitrarily far
from optimal, even in very simple cases. This is a disturbing result because these techniques are standard tools that are widely used for sequential optimisation.
For example, for generalised linear bandits and reinforcement learning.
\end{abstract}

\section{INTRODUCTION}\label{sec:intro}

The linear bandit is the simplest generalisation of the finite-armed bandit.
Let $\mathcal A \subset \R^d$ be a finite set that spans $\R^d$ with $|\mathcal A| = k$ and $\norm{x}_2 \leq 1$ for all $x \in \cA$. 
A learner interacts with the bandit over $n$ rounds. In each round $t$
the learner chooses an action (arm) $A_t \in \mathcal A$ and observes a payoff $Y_t = \inner{A_t, \theta} + \eta_t$ where $\eta_t \sim \mathcal N(0,1)$
is Gaussian noise and $\theta \in \R^d$ is an unknown parameter. 
The optimal action is $x^* = \argmax_{x \in \mathcal A} \inner{x, \theta}$, which is not known since it depends on $\theta$.
The assumption that $\cA$ spans $\R^d$ is non-restrictive, since if $\laspan(\cA)$ has rank $r < d$, then one can simply use a different basis for which
all but $r$ coordinates are always zero and then drop them from the analysis. The Gaussian assumption can be relaxed to $1$-subgaussian for our upper bound, but
is needed for the lower bound.
Our performance measure is the expected pseudo-regret (from now on just the regret), which is given by 
\eq{
R^\pi_\theta(n) = \E\left[\sum_{t=1}^n \inner{x^* - A_t, \theta}\right]\,,
}
where the expectation is taken with respect to the actions of the strategy and the noise.
There are a number of algorithms designed for minimising the regret, all of which use one of
two algorithmic designs. The first is the principle of optimism in the face of uncertainty, which was originally applied to finite-armed bandits by
\cite{Agr95,KR95,ACF02} and many others, and more recently to linear bandits \citep{Aue02,DHK08,AST11,APS12}.
The second algorithm design is Thompson sampling, which is an old algorithm \citep{Tho33} that has experienced a resurgence in popularity because of its
impressive practical performance and theoretical guarantees for finite-armed bandits \citep{KKM12,KKM13}.
Thompson sampling has also recently been applied to linear bandits with good empirical performance \citep{CL11} and near-minimax theoretical guarantees \citep{AG13}.

While both approaches lead to practical algorithms (especially Thompson sampling), we will show they are fundamentally
flawed in that algorithms based on these ideas cannot be close to asymptotically optimal.
Along the way we characterise the optimal achievable asymptotic regret and design a strategy achieving it.
This is an important message because optimism and Thompson sampling are widely used beyond the finite-armed case.
Examples include generalised linear bandits \citep{FCGS10}, spectral bandits \citep{VMKK14}, and even learning in Markov decision processes \citep{AJO10,GM15}.

The disadvantages of these approaches is obscured in the worst-case regime, where both are quite close to optimal. One might question whether or not the asymptotic analysis
is relevant in practice. The gold standard would be instance-dependent finite-time guarantees like what is available for finite-armed bandits, but
historically the asymptotic analysis has served as a useful guide towards understanding the trade-offs in finite-time. Besides hiding the structure of specific problems, pushing
for optimality in the worst-case regime can also lead to sub-optimal instance-dependent guarantees. For example, the MOSS algorithm for finite-armed bandits is minimax optimal, but
far from finite-time optimal \citep{AB09}. For these reasons we believe that understanding the asymptotics of a problem is a useful first step towards optimal finite-time instance-dependent
guarantees that are most desirable.

It is worth mentioning that partial monitoring (a more complicated online learning setting) is a well known example of the failure of optimism \citep{BFPRS14}.
Although related, the partial monitoring framework is more general than the bandit setting because the learner may not 
observe the reward even for the action they take, which means that additional exploration is usually necessary in order to gain information. 
Basic results in partial monitoring are concerned with characterizing whether an instance is easier or harder than bandit instances.
More recently, the question of asymptotic instance optimality was studied in finite stochastic partial monitoring  \citep{KHN15}, and the special setting of learning with side information  \citep{Wu15}. While the algorithms derived in these works served as inspiration, the analysis and the algorithms do not generalise in a simple direct fashion to the linear setting, which requires a careful study of how information is transferred between actions in a linear setting.

\section{NOTATION}\label{sec:notation}

For positive semidefinite $G$ (written as $G\succeq 0$) and vector $x$ we write $\norm{x}_G^2 = x^\top G x$.
The Euclidean norm of a vector $x\in \R^d$ is $\norm{x}$ and
the spectral norm of a matrix $A$ is $\norm{A}$. 
The pseudo-inverse of a matrix $A$ is denoted by $A^\dagger$.
The mean of arm $x \in \cA$ is $\mu_x = \inner{x, \theta}$ and the optimal mean is $\mu^* = \max_{x \in \mathcal A} \mu_x$. Let $x^*\in \cA$ be 
any \emph{optimal action} such that $\mu_{x^*} = \mu^*$. The sub-optimality gap of arm $x$ is $\Delta_x = \mu^* - \mu_x$ and 
$\Delta_{\min} = \min \set{\Delta_x : \Delta_x > 0, x \in \mathcal A}$
and $\Delta_{\max} = \max \set{\Delta_x : x \in \mathcal A}$.
The number of times arm $x$ has been chosen after round $t$ is denoted by $T_x(t) = \sum_{s=1}^t \ind{A_t = x}$ and
$T_*(t) = \sum_{s=1}^t \ind{\mu_{A_t} = \mu^*}$.
A policy $\pi$ is \textit{consistent} if for all $\theta$ and $p >0 $ it holds that $R^\pi_\theta(n) = o(n^p)$. Note that this is equivalent to $R^\pi_\theta(n) = O(n^p)$ and also to $\limsup_{n\to\infty}\log(R^\pi_\theta(n))/\log(n)\le 0$.
\todot{how is Landau notation more precise?} \todoc{Indeed, this is misleading. The other is also called Landau notation. Maybe more "faithful"?}  When more appropriate, we will use the more precise Landau notation $a_n \in O(b_n)$ (also with $\Omega$, $o$ and $\omega$).
Vectors in $\R^k$ will often be indexed by the action set, which we assume has an arbitrary fixed order. 
For example, we might write $\alpha \in \R^k$ and refer to $\alpha_x \in \R$ for some $x \in \cA$.

\section{LOWER BOUND}\label{sec:lower}
\newcommand{\bGn}{\bar G_n}
\newcommand{\xopt}{x^*}
\newcommand{\xoptT}{(x^*)^\top}
\newcommand{\Toptn}{T_*(n)}
\newcommand{\EToptn}{\EE{\Toptn}}
\newcommand{\ETxn}{\EE{T_x(n)}}
\newcommand{\EpToptn}{\EEp{\Toptn}}
\newcommand{\EpTxn}{\EEp{T_x(n)}}
\newcommand{\Deltamin}{\Delta_{\min}}
\newcommand{\bGnk}{\bar G_{n_k}}
\newcommand{\bGnj}{\bar G_{n_j}}
\newcommand{\tH}{\tilde{H}}

We note first that the finite-armed UCB algorithm of \cite{Agr95,KR95} can be used on this problem by disregarding the structure on the arms to achieve an
asymptotic regret of
\eq{
\limsup_{n\to\infty} \frac{R^{\smallsc{ucb}}_\theta(n)}{\log(n)} = \sum_{x \in \mathcal A:\Delta_x > 0} \frac{2}{\Delta_x}\,.
}
This quantity depends \emph{ linearly } on the number of suboptimal arms, which may be very large (much larger than the dimension) and is very undesirable.
Nevertheless we immediately observe that the asymptotic regret should be logarithmic. The following theorem and its corollary
characterises the optimal asymptotic regret.

\begin{theorem}\label{thm:lower}
Fix $\theta\in \R^d$ such that there is a unique optimal arm.
Let $\pi$ be a consistent policy and let 
\eq{
\bar G_n = \E\left[\sum_{t=1}^n A_t A_t^\top\right]\,,
}
which we assume is invertible for sufficiently large $n$.
Then for all suboptimal $x \in \mathcal A$ it holds that
\eq{
\limsup_{n\to\infty} \log(n) \norm{x-x^*}_{\bar G_n^{-1}}^2 \leq \frac{\Delta_x^2}{2}\,.
}
\end{theorem}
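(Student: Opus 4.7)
The strategy is the Lai--Robbins change-of-measure argument, with the alternative parameter chosen so that the resulting KL-divergence computation extracts exactly the Mahalanobis norm $\|x-x^*\|^2_{\bar G_n^{-1}}$ that appears in the statement.

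Fix a suboptimal $x\in\cA$ and $\epsilon>0$. For $n$ large enough that $\bar G_n$ is invertible I would define
\[
\theta'=\theta+\alpha\,\bar G_n^{-1}(x-x^*), \qquad \alpha=\frac{\Delta_x+\epsilon}{\|x-x^*\|^2_{\bar G_n^{-1}}}\,.
\]
A direct computation gives $\shortinner{x-x^*,\theta'}=-\Delta_x+\alpha\|x-x^*\|^2_{\bar G_n^{-1}}=\epsilon>0$, so under $\theta'$ arm $x$ beats $x^*$ by $\epsilon$ and in particular $x^*$ is suboptimal with gap at least $\epsilon$. Among all perturbations satisfying this confusion constraint, this one is Cauchy--Schwarz-optimal in that it minimises $\|\theta-\theta'\|^2_{\bar G_n}$.

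Next I would compute the KL-divergence between the laws $\mathbb P_\theta,\mathbb P_{\theta'}$ of the full history $(A_1,Y_1,\dots,A_n,Y_n)$. Since $Y_t\mid A_t\sim\mathcal N(\shortinner{A_t,\theta},1)$ (respectively $\theta'$) and the actions depend only on past observations, the chain rule for KL and the Gaussian formula give
\[
\KL(\mathbb P_\theta,\mathbb P_{\theta'}) \;=\; \tfrac{1}{2}\,\mathbb E_\theta\!\Bigl[\textstyle\sum_{t=1}^n \shortinner{A_t,\theta-\theta'}^2\Bigr] \;=\; \tfrac{1}{2}\|\theta-\theta'\|^2_{\bar G_n} \;=\; \frac{(\Delta_x+\epsilon)^2}{2\,\|x-x^*\|^2_{\bar G_n^{-1}}}\,.
\]
The complementary step is a data-processing lower bound on this KL that comes from consistency. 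Take $E_n=\{T_*(n)\le n/2\}$. Under $\theta$ every suboptimal pull costs at least $\Deltamin$, so $R_\theta^\pi(n)\ge\Deltamin\,\mathbb E_\theta[n-T_*(n)]$ and Markov's inequality give $\mathbb P_\theta(E_n)\le 2R_\theta^\pi(n)/(n\Deltamin)$; under $\theta'$ the gap at $x^*$ is at least $\epsilon$, so $\mathbb P_{\theta'}(E_n^c)\le 2R_{\theta'}^\pi(n)/(n\epsilon)$. Consistency makes both sides $o(n^{p-1})$ for every $p>0$, and the Bretagnolle--Huber inequality
\[
\mathbb P_\theta(E_n)+\mathbb P_{\theta'}(E_n^c) \;\ge\; \tfrac{1}{2}\exp\!\bigl(-\KL(\mathbb P_\theta,\mathbb P_{\theta'})\bigr)
\]
then forces $\KL(\mathbb P_\theta,\mathbb P_{\theta'})\ge(1-o(1))\log n$. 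Combining with the explicit KL expression gives $\|x-x^*\|^2_{\bar G_n^{-1}}\le (\Delta_x+\epsilon)^2/(2(1-o(1))\log n)$; taking $\limsup_n$ and then $\epsilon\downarrow 0$ closes the argument.

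The main obstacle hides in the phrase ``consistency at $\theta'$'': the Cauchy--Schwarz-optimal $\theta'$ depends on $n$ through $\bar G_n$, whereas consistency is only a pointwise statement in the parameter, so the bound $R_{\theta'}^\pi(n)=o(n^p)$ cannot be invoked verbatim. I would patch this either by extracting a subsequence along which a normalised version of $\bar G_{n_k}^{-1}(x-x^*)$ converges in direction, reducing to a \emph{fixed} alternative along the limiting direction, or by a uniform-local-consistency argument that exploits $\theta'\to\theta$ as $n\to\infty$ to transfer the polynomial-regret bound to the $n$-dependent family. Making the $o(1)$ terms behave uniformly in this transfer is the only real technical point; everything else is an exercise in the standard divergence decomposition.
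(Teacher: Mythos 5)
Your outline coincides with the paper's proof in every step up to the choice of the alternative parameter: the same event $\{T_*(n)\le n/2\}$, the same use of consistency at both parameters to make $\mathbb P(A)+\mathbb P'(A^c)$ subpolynomial, the same Bretagnolle--Huber plus divergence-decomposition combination yielding $\KL=\tfrac12\norm{\theta-\theta'}^2_{\bar G_n}$, and the same Cauchy--Schwarz-optimal direction $\bar G_n^{-1}(x-x^*)$ in mind. You have also correctly diagnosed the one real difficulty --- that this optimal $\theta'$ depends on $n$ while consistency is pointwise --- and your first proposed repair (pass to a subsequence along which the direction stabilises and use a \emph{fixed} alternative) is exactly the paper's route: it sets $\theta'=\theta+\frac{H(x-x^*)}{\norm{x-x^*}_H^2}(\Delta_x+\epsilon)$ for a fixed $H\succeq 0$, pays a multiplicative factor $\rho_n(H)=\norm{s}^2_{\bar G_n^{-1}}\norm{s}^2_{H\bar G_nH}\norm{s}_H^{-4}\ge 1$ (with $s=x-x^*$) in the KL, and then takes $H$ to be a cluster point of the \emph{normalised matrices} $\bar G_n^{-1}/\norm{\bar G_n^{-1}}$ along the subsequence realising the $\limsup$, so that $\liminf_k\rho_{n_k}(H)\le 1$. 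Two concrete obstacles remain unaddressed in your sketch, and they are where essentially all of the work in the paper's proof lives. First, the cluster point may satisfy $\norm{s}_H=0$, in which case the fixed alternative is undefined (equivalently, your limiting direction $u$ may have $\ip{s,u}=0$ even though $\ip{s,\bar G_{n_k}^{-1}s}>0$ for every $k$); the paper rules this out by a separate contradiction argument, perturbing to $H_\gamma=H+\gamma I$ and showing $\liminf_k\rho_{n_k}(H_\gamma)=0$, which contradicts the lower bound $\rho\gtrsim 1$ already forced by consistency. Second, tightness of the KL along the fixed direction does not follow from convergence of the normalised \emph{vectors} $\bar G_{n_k}^{-1}s/\norm{\bar G_{n_k}^{-1}s}$ alone: the error terms involve $\bar G_{n_k}$ applied to a vanishing difference, and $\norm{\bar G_{n_k}}\to\infty$ at an uncontrolled rate; this is why the paper rewrites $\rho_n$ entirely in terms of the bounded matrices $\tilde H_n=\bar G_n^{-1}/\norm{\bar G_n^{-1}}$ before taking limits.

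Your second proposed repair is based on a false premise and should be dropped. The perturbation does \emph{not} satisfy $\theta'_n\to\theta$: since $\theta'_n$ must flip the identity of the optimal arm, Cauchy--Schwarz gives $\norm{\theta'_n-\theta}\ge(\Delta_x+\epsilon)/\norm{x-x^*}$, bounded away from zero; worse, $\norm{\theta'_n-\theta}$ can grow like the square root of the condition number of $\bar G_n$, which is not under your control. So there is no shrinking neighbourhood of $\theta$ in which a ``uniform local consistency'' argument could be run, and any transfer of the subpolynomial regret bound to the moving family $\{\theta'_n\}$ would have to contend both with the pointwise nature of consistency and with the possible growth of $\norm{\theta'_n}$ --- precisely the two difficulties that push the paper to the fixed-$H$ formulation. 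In short: right architecture and right diagnosis, but the first repair is the only viable one and its two delicate steps (non-degeneracy of the limit and asymptotic tightness of the KL) are exactly the content of the paper's proof that is still missing from yours.
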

The astute reader may recognize $\norm{x-x^*}_{\bar G_n^{-1}}$ as the leading factor in the 
width of the confidence interval for estimating the gap $\Delta_x$ using a linear least squares estimator.
The result says that this width has to shrink at least logarithmically with a specific constant.
Before the proof of Theorem \ref{thm:lower} we present a trivial corollary and some consequences.
The assumption that $\bar G_n$ is eventually invertible can be relaxed. In fact, if $\bar G_n$ is not eventually invertible, then
the algorithm must suffer linear regret on some problem. This is quite natural because a singular $\bar G_n$ implies the algorithm
has not explored at all in some direction. The proof of this fact may be found in \ifsup Appendix \ref{app:singular}. \else the supplementary material. \fi

\begin{corollary}
\label{cor:regretlb}
Let $\pi$ be a consistent policy, $\theta\in \R^d$ such that there is a unique optimal arm in $\cA$. 
Then 
\begin{align}
\limsup_{n\to\infty} \log(n) \norm{x}_{\bar G_n^{-1}}^2 \leq \frac{\Delta_x^2}{2}\,
\label{eq:confwidth}
\end{align}
and also
$\displaystyle \limsup_{n\to\infty} \frac{R_\theta^\pi(n)}{\log(n)} \geq c(\mathcal A, \theta)$\,, \\[0.1cm]
where $c(\mathcal A, \theta)$ is defined as the solution to the following optimisation problem:
\begin{align}
\begin{split}
\inf_{\alpha\in [0,\infty)^{\cA}} \sum_{x\in \cA^{-}} \alpha(x) \Delta_x \text{ subject to } \\
\norm{x}^2_{H^{-1}(\alpha)} \leq \frac{\Delta_x^2}{2}\,, \quad \forall x\in \cA^-\,,
\end{split}
\label{eq:optproblem}
\end{align}
where $H(\alpha) = \sum_{x\in \cA} \alpha(x) x x^\top$.
\end{corollary}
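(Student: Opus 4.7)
The plan is to first deduce the confidence-width bound \eqref{eq:confwidth} from Theorem~\ref{thm:lower} by handling the gap between $\norm{x-x^*}_{\bar G_n^{-1}}$ (what Theorem~\ref{thm:lower} controls) and $\norm{x}_{\bar G_n^{-1}}$ (what \eqref{eq:confwidth} asserts), and then to recognise $(\ETxn)_{x\in\cA}$, suitably rescaled by $\log n$, as an essentially feasible point of the optimisation \eqref{eq:optproblem}, at which the objective equals $R_\theta^\pi(n)/\log n$.

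For step one, from $\bar G_n \succeq \EE{T_{x^*}(n)}\,x^*(x^*)^\top$ and eventual invertibility of $\bar G_n$, a short calculation (setting $u = \bar G_n^{-1}x^*$ and using $(x^*)^\top \bar G_n^{-1} x^* = u^\top \bar G_n u \ge \EE{T_{x^*}(n)}\,((x^*)^\top u)^2 = \EE{T_{x^*}(n)}\,((x^*)^\top\bar G_n^{-1} x^*)^2$) yields $\norm{x^*}^2_{\bar G_n^{-1}} \le 1/\EE{T_{x^*}(n)}$. Consistency forces $\sum_{x\ne x^*}\ETxn \le R_\theta^\pi(n)/\Deltamin = o(n^p)$ for every $p>0$, so $\EE{T_{x^*}(n)} \ge n/2$ for large $n$ and $\norm{x^*}^2_{\bar G_n^{-1}} = O(1/n) = o(1/\log n)$. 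Combining with the triangle inequality $\norm{x}_{\bar G_n^{-1}} \le \norm{x-x^*}_{\bar G_n^{-1}} + \norm{x^*}_{\bar G_n^{-1}}$, squaring, multiplying by $\log n$, and using Cauchy--Schwarz together with Theorem~\ref{thm:lower} to bound the cross term, I obtain \eqref{eq:confwidth} (and the case $x = x^*$ is covered trivially by the same $O(1/n)$ bound).

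For step two, I would set $\alpha_n(x) = \ETxn/\log n$, so that $H(\alpha_n) = \bar G_n/\log n$, $\norm{x}^2_{H(\alpha_n)^{-1}} = \log(n)\,\norm{x}^2_{\bar G_n^{-1}}$, and $\sum_{x\in\cA^-}\alpha_n(x)\Delta_x = R_\theta^\pi(n)/\log n$. By \eqref{eq:confwidth}, $\limsup_n \norm{x}^2_{H(\alpha_n)^{-1}} \le \Delta_x^2/2$ for every $x\in\cA^-$, so $\alpha_n$ is only \emph{asymptotically} feasible. The main obstacle in the proof is converting this slack-tolerant feasibility into a bound involving the exact infimum $c(\cA,\theta)$; it is handled by a homogeneity trick. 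Namely, fix $\epsilon>0$; by finiteness of $\cA^-$, for all large $n$ we have $\norm{x}^2_{H(\alpha_n)^{-1}} \le (1+\epsilon)\Delta_x^2/2$ simultaneously for every $x\in\cA^-$, and then $H((1+\epsilon)\alpha_n)^{-1} = (1+\epsilon)^{-1} H(\alpha_n)^{-1}$ shows $(1+\epsilon)\alpha_n$ is feasible for \eqref{eq:optproblem}. Therefore $(1+\epsilon)\,R_\theta^\pi(n)/\log n \ge c(\cA,\theta)$ for all large $n$, and taking $\liminf_n$ followed by $\epsilon\to 0$ gives $\liminf_n R_\theta^\pi(n)/\log n \ge c(\cA,\theta)$, from which the claimed limsup bound is immediate.
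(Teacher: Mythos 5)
Your proposal is correct, and the overall architecture coincides with the paper's: derive \eqref{eq:confwidth} from Theorem~\ref{thm:lower} using the fact that consistency forces $\EE{T_{x^*}(n)}=\Omega(n)$, then feed $\alpha_n(x)=\E[T_x(n)]/\log n$ into \eqref{eq:optproblem} as an asymptotically feasible point whose objective value is $R^\pi_\theta(n)/\log n$. The differences are in the two technical steps, and in both cases your version is a little more economical. For \eqref{eq:confwidth}, the paper proves the stronger statement $\lim_n \log(n)\, y^\top \bar G_n^{-1} x^* = 0$ for \emph{every} $y\in\R^d$, splitting into the cases $y=x^*$ and $y\perp x^*$; the perpendicular case requires an explicit computation with $v=\bar G_n^{-1}y$ and the decomposition of $\bar G_n$ over the arms. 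You avoid this entirely: the bound $\norm{x^*}^2_{\bar G_n^{-1}}\le 1/\EE{T_{x^*}(n)}=O(1/n)$ plus the triangle inequality for $\norm{\cdot}_{\bar G_n^{-1}}$ and the factorisation $\log n=\sqrt{\log n}\cdot\sqrt{\log n}$ on the cross term already suffice, since Theorem~\ref{thm:lower} keeps $\log(n)\norm{x-x^*}^2_{\bar G_n^{-1}}$ bounded while $\log(n)\norm{x^*}^2_{\bar G_n^{-1}}\to 0$. For the second half, the paper relaxes the constraint additively to $\Delta_x^2/2+\epsilon$, obtains $\liminf_n R^\pi_\theta(n)/\log n\ge c_\epsilon(\cA,\theta)$, and then invokes (without proof) the continuity statement $\inf_{\epsilon>0}c_\epsilon(\cA,\theta)=c(\cA,\theta)$. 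Your multiplicative rescaling $(1+\epsilon)\alpha_n$, exploiting the homogeneity $H((1+\epsilon)\alpha)^{-1}=(1+\epsilon)^{-1}H(\alpha)^{-1}$, produces an \emph{exactly} feasible point and thus yields $(1+\epsilon)R^\pi_\theta(n)/\log n\ge c(\cA,\theta)$ directly, sidestepping that continuity claim. Both routes rely on the same underlying facts; yours buys a shorter and slightly more self-contained argument.
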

As with the previous result, in~\eqref{eq:confwidth} the reader may recognize the leading term of the confidence width
for estimating the mean reward of $x$. Unsurprisingly, the width of this confidence interval has to shrink at least as fast
as the width of the confidence interval for estimating the gap $\Delta_x$.
The intuition underlying the optimisation problem~\eqref{eq:optproblem} is that no consistent strategy can escape allocating samples so that the gaps of all suboptimal actions are identified with high confidence, while a good strategy will also minimise the regret subject to the identifiability condition.
The proof of Corollary \ref{cor:regretlb} is given in \ifsup Appendix \ref{app:cor:regretlb}. \else the supplementary material. \fi

\begin{example}[Finite armed bandits]
Suppose $k = d$ and $\mathcal A =  \set{e_1,\ldots,e_k}$ be the standard basis vectors.
Then 
\eq{
c(\mathcal A, \theta) = \sum_{x \in \mathcal A : \Delta_x > 0} \frac{2}{\Delta_x}\,,
}
which recovers the lower bound by \cite{LR85}.
\end{example}

\begin{example}
Let $\alpha >1$ and $d = 2$ and $\cA = \set{x_1, x_2, x_3}$ with $x_1 = (1,0)$ and $x_2 = (0,1)$ and $x_3 = (1-\epsilon, \alpha \epsilon)$ and $\theta = (1,0)$.
Then $c(\mathcal A, \theta) = 2\alpha^2$ for all sufficiently small $\epsilon$.
The example serves to illustrate the interesting fact that $c(\mathcal A - \set{x_2}, \theta) = 2\epsilon^{-1} \gg c(\cA, \theta)$, which 
means that the problem becomes significantly harder if $x_2$ is removed from the action-set. The reason is that $x_1$ and $x_3$ are pointing in nearly the same
direction, so learning the difference is very challenging. But determining which of $x_1$ and $x_3$ is optimal is easy by playing $x_2$.
So we see that in linear bandits there is a complicated trade-off between information and regret that makes the structure of the optimal strategy more interesting
than in the finite setting.
\end{example}

The closest prior work to our lower bound is by \citet{KHN15} and \citet{AgTeAn89:pmon}. The latter consider stochastic partial monitoring when the 
reward is part of the observation. In this setting in each round, the learner selects one of finitely many actions and receives an observation from 
a distribution that depends on the action chosen and an unknown parameter, but is otherwise known. While this model could cover our setting, the 
results in the paper are developed only for the case when the unknown parameter belongs to a finite set, an assumption that all the results of the 
paper heavily depend on. \citet{KHN15} on the other hand restricts partial monitoring to the case when the observations belong to a finite set, 
while the parameter belongs to the unit simplex. While this problem also has a linear structure, their results do not generalize beyond the discrete 
observation setting.

\section{PROOF OF THEOREM \ref{thm:lower}}\label{sec:lower-proof}
\newcommand{\PP}{\bbP}

We make use of two standard results from information theory.
The first is a high probability version of Pinsker's inequality. 

\begin{lemma}\label{lem:kl}
Let $\mathbb P$ and $\mathbb P'$ be measures on the same measurable space $(\Omega,\cF)$. Then for any event $A\in \cF$,
\eqn{
\P{A} + \Pp{A^c} \geq \frac{1}{2} \exp\left(-\KL(\mathbb P, \mathbb P')\right)\,,
\label{eq:kl}
}
where $A^c$ is the complementer event of $A$ ($A^c = \Omega\setminus A$) and $\KL(\mathbb P, \mathbb P')$ is the relative entropy between $\PP$ and $\PP'$, which is defined as $+\infty$, if $\PP$ is not absolutely continuous with respect to $\PP'$, and is $\int_\Omega d\PP(\omega) \log \frac{d\PP}{d\PP'}(\omega)$ otherwise.
\end{lemma}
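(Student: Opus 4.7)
The plan is to route the argument through the Hellinger affinity $H(\mathbb P,\mathbb P') = \int \sqrt{d\mathbb P\, d\mathbb P'}$, which will connect $\KL$ and $A$ in a clean way. After dispensing with the case $\mathbb P \not\ll \mathbb P'$ (where the right-hand side of~\eqref{eq:kl} is zero and the bound is vacuous), I would fix a common dominating measure $\mu$ (say $\mu = (\mathbb P + \mathbb P')/2$) with densities $p = d\mathbb P/d\mu$ and $q = d\mathbb P'/d\mu$. A single application of Jensen's inequality to the concave function $\log$ yields
\begin{align*}
-\log H \;=\; -\log \mathbb E_{\mathbb P}\!\bigl[\sqrt{q/p}\,\bigr] \;\le\; \tfrac{1}{2}\,\mathbb E_{\mathbb P}\!\bigl[\log(p/q)\bigr] \;=\; \tfrac{1}{2}\KL(\mathbb P,\mathbb P')\,,
\end{align*}
i.e.\ $H \ge \exp(-\KL(\mathbb P,\mathbb P')/2)$. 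This is the only place in the argument where $\KL$ enters.

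Next I would split $H$ over $A$ and $A^c$, apply Cauchy--Schwarz on each piece, and then use the trivial bound $\sqrt{ab} \le \sqrt a$ whenever $b\in[0,1]$:
\begin{align*}
H \;\le\; \sqrt{\mathbb P(A)\,\mathbb P'(A)} + \sqrt{\mathbb P(A^c)\,\mathbb P'(A^c)} \;\le\; \sqrt{\mathbb P(A)} + \sqrt{\mathbb P'(A^c)}\,.
\end{align*}
One final instance of AM--GM in the form $\sqrt a + \sqrt b \le \sqrt{2(a+b)}$ produces $H \le \sqrt{2(\mathbb P(A) + \mathbb P'(A^c))}$. Combining this with the Hellinger--KL bound from the previous step and squaring yields exactly~\eqref{eq:kl}.

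I do not expect any substantive obstacle: each of the three ingredients (Jensen, Cauchy--Schwarz, AM--GM) is a one-line inequality, and the argument is really a matter of arranging them in the right order. The only bookkeeping is the measure-theoretic setup and the disposal of the non-absolutely-continuous case at the outset. An alternative route would invoke the data-processing inequality to reduce to a pair of Bernoulli measures with parameters $\mathbb P(A)$ and $\mathbb P'(A)$ and then verify the Bernoulli version by a direct calculation on $d(p,q)$, but the Hellinger-affinity argument above is shorter and side-steps the Bernoulli case analysis entirely.
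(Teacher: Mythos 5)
Your proof is correct: the case split for $\mathbb P\not\ll\mathbb P'$, the Jensen step giving $H\ge\exp(-\KL(\mathbb P,\mathbb P')/2)$, and the chain $H\le\sqrt{\mathbb P(A)\,\mathbb P'(A)}+\sqrt{\mathbb P(A^c)\,\mathbb P'(A^c)}\le\sqrt{\mathbb P(A)}+\sqrt{\mathbb P'(A^c)}\le\sqrt{2(\mathbb P(A)+\mathbb P'(A^c))}$ all hold, and squaring gives exactly \eqref{eq:kl}. The paper does not prove this lemma itself but defers to Lemma 2.6 of \citet{Tsy08} (the Bretagnolle--Huber inequality), whose standard proof is precisely the Hellinger-affinity route you follow; your only (harmless) deviation is in the last step, where you use Cauchy--Schwarz on each of $A$, $A^c$ plus $\sqrt a+\sqrt b\le\sqrt{2(a+b)}$ instead of the usual intermediate bound $\mathbb P(A)+\mathbb P'(A^c)\ge\int\min(d\mathbb P,d\mathbb P')\ge\tfrac12 H^2$, and both yield the same constant $\tfrac12$.
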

This result follows easily from Lemma 2.6 of \citet{Tsy08}.

The second lemma is sometimes called the information processing lemma and
shows that the relative entropy between measures on sequences of outcomes for the same algorithm interacting with different bandits can be
decomposed in terms of the expected number of times each arm is chosen and the relative entropies of the distributions of the arms. 
There are many versions of this result (e.g., \cite{ACFS95} and \cite{GL16}). 
To state the result, 
assume without the loss of generality that the measure space underlying the action-reward 
sequence $(A_1,Y_1,\dots,A_n,Y_n)$ is $\Omega_n  \doteq (\cA\times \R)^n$ and $A_t$ and $Y_t$ are the respective coordinate 
projections: $A_t(a_1,y_1,\dots,a_n,y_n) = a_t$ and $Y_t(a_1,y_1,\dots,a_n,y_n) = y_t$, $1\le t \le n$.

\begin{lemma}\label{lem:inf-processing}
Let $\mathbb P$ and $\mathbb P'$ 
be the probability measures on the sequence  $(A_1, Y_1,\ldots,A_n,Y_n)\in \Omega_n$ for a fixed
bandit policy $\pi$ interacting with a linear bandit with standard Gaussian noise and parameters $\theta$ and $\theta'$ respectively. Under these conditions the KL divergence of $\mathbb P$ and $\mathbb P'$ can be computed  exactly and is given by
\eqn{
\KL(\mathbb P, \mathbb P') = \frac12 \sum_{x \in \mathcal A} \E[T_x(n)]\, \shortinner{x, \theta - \theta'}^2\,,
}
where $\E$ is the expectation operator induced by $\PP$. 
\end{lemma}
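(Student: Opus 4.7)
The plan is to use the chain rule for relative entropy along the filtration generated by $(A_1,Y_1,\dots,A_n,Y_n)$ and exploit the fact that the policy $\pi$ is a common object under both $\bbP$ and $\bbP'$: the conditional distribution of $A_t$ given the history $H_{t-1}=(A_1,Y_1,\dots,A_{t-1},Y_{t-1})$ is determined by $\pi$ alone, so the two parameters $\theta,\theta'$ only affect the conditional distribution of the reward $Y_t$ given $(H_{t-1},A_t)$.

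Concretely, I would first write the density (Radon-Nikodym derivative) of $\bbP$ with respect to a common dominating measure (e.g.\ the product of counting measure on $\cA$ and Lebesgue measure on $\R$, iterated $n$ times) as a product
\eq{
\frac{d\bbP}{d\nu}(a_1,y_1,\dots,a_n,y_n) = \prod_{t=1}^n \pi_t(a_t\mid h_{t-1})\, \varphi(y_t - \shortinner{a_t,\theta})\,,
}
where $\pi_t(\cdot\mid h_{t-1})$ is the (possibly randomised) kernel used by the policy and $\varphi$ is the standard Gaussian density. The analogous formula for $\bbP'$ replaces $\theta$ with $\theta'$. Taking the log-ratio, the $\pi_t$ factors cancel pointwise, leaving only a sum of Gaussian log-likelihood differences.

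Next I would take expectation under $\bbP$ to obtain
\eq{
\KL(\bbP,\bbP') = \sum_{t=1}^n \EE{\KL\bigl(\cN(\shortinner{A_t,\theta},1),\cN(\shortinner{A_t,\theta'},1)\bigr)} = \sum_{t=1}^n \EE{\tfrac12 \shortinner{A_t,\theta-\theta'}^2}\,,
}
using the standard formula $\KL(\cN(\mu,1),\cN(\mu',1)) = \tfrac12(\mu-\mu')^2$ together with the tower property (conditioning on $(H_{t-1},A_t)$ makes the inner KL deterministic in $A_t$). Finally, interchange the sum over $t$ and rewrite $\sum_{t=1}^n \shortinner{A_t,\theta-\theta'}^2 = \sum_{x\in\cA} T_x(n)\shortinner{x,\theta-\theta'}^2$ by grouping rounds according to the arm played; linearity of expectation delivers the claimed identity.

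The only genuinely delicate point is the density factorisation: one must be careful that the policy kernels $\pi_t$ are measurable functions of the history that do not depend on $\theta$, so that they cancel in the likelihood ratio. Once that is set up rigorously (or justified by induction on $n$ applying the chain rule $\KL(\bbP_t,\bbP'_t)=\KL(\bbP_{t-1},\bbP'_{t-1})+\EE{\KL(\bbP_t(\cdot\mid H_{t-1}),\bbP'_t(\cdot\mid H_{t-1}))}$ in turn), the remaining calculation is just the Gaussian KL formula and a bookkeeping rearrangement.
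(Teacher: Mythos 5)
Your proof is correct and is exactly the standard argument: the paper itself does not prove this lemma but defers to the cited references (e.g.\ the divergence decomposition in \cite{ACFS95,GL16}), and those proofs proceed precisely as you do---factorise the joint density so the policy kernels cancel in the likelihood ratio, apply the chain rule and the Gaussian KL formula $\KL(\cN(\mu,1),\cN(\mu',1))=\tfrac12(\mu-\mu')^2$, and regroup the sum over rounds by arm using $T_x(n)=\sum_{t=1}^n\ind{A_t=x}$. No gaps; your flagged caveat about the policy kernels being $\theta$-independent is the right thing to be careful about and is handled by your induction via the chain rule.
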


\begin{proof}[Proof of Theorem \ref{thm:lower}]
Recall that $\xopt$ is the optimal arm, which we assumed to be unique. Let $x \in \cA$ be a suboptimal arm (so $\Delta_x > 0$) and
$A\subset \Omega_n$ be an event to be chosen later.
Rearranging~\eqref{eq:kl} gives $\KL(\PP,\PP') \ge \log(\frac{1}{2 \P{A}+2\Pp{A^c}})$
and recalling that
$\bar G_n = \E\left[\sum_{t=1}^n A_t A_t^\top\right]$, together with Lemma \ref{lem:inf-processing}
we get
 that 
\begin{align}\label{eq:chained}
\frac12 \norm{ \theta - \theta' }_{\bGn}^2 = \KL(\PP,\PP') \ge \log\left(\frac{1}{2 \P{A}+2\Pp{A^c}}\right)\,.
\end{align}
Now we choose $\theta'$ ``close'' to $\theta$, but in a such a way that $\inner{x - x^*, \theta'} > 0$, meaning in the bandit determined by $\theta'$ 
the optimal action is not $x^*$.
Selecting $A = \{ T_{x^*}(n)\le n/2 \}$ ensures that $\P{A} + \Pp{A^c}$ is small, because $\pi$ is consistent. Intuitively, this 
holds because if $\P{A}$ is large then $x^*$ is not used much in $\theta$, hence $R_n \doteq R_\theta^\pi(n)$ must be large. 
If $\Pp{A^c}$ is large, then $x^*$ is used often in $\theta'$, hence $R_n' \doteq R_{\theta'}^\pi(n)$ must be large.
But from the consistency of $\pi$ we know that both $R_n$ and $R_n'$ are sub-polynomial.
Let $\epsilon>0$ and $H\succeq 0$ ($H\in \R^{d\times d}$) to be chosen later and define $\theta'$ by 
\begin{align}
\theta' = \theta + \frac{H (x-\xopt)}{\norm{x-\xopt}_{H}^2 } (\Delta_{x}+\epsilon)\,,
\label{eq:thetapdef}
\end{align}
where we also restrict $H$ so that $\norm{x-\xopt}_H^2>0$.
Then, 
\begin{align}
\label{eq:suboptx}
\ip{x-\xopt,\theta'} = \ip{x-\xopt,\theta} + \Delta_x + \epsilon = \epsilon>0\,.
\end{align}
Hence the mean reward of $x$ is higher than that of $\xopt$ in $\theta'$.
\begin{align*}
R_n &= \sum_x \Delta_x \ETxn
          \ge \Deltamin \EE{ (n-\Toptn) }\\
        &\ge \Deltamin \,\EE{ \ind{ \Toptn\le n/2 } \frac{n}{2} } \\
        &= \frac{\Deltamin n}{2}\, \P{ \Toptn\le n/2}\,.
\end{align*}
On the other hand, introducing $\Delta'_y = \max_z \ip{z,\theta'} - \ip{y,\theta'}$ and $\Ep$ to denote the expectation
operator induced by $\PP'$ and using that by~\eqref{eq:suboptx}, $\xopt$ is suboptimal in $\theta'$, we also have
\begin{align*}
R_n' &=  \sum_x \Delta'_x \EpTxn \ge \Delta_{\xopt}' \EpToptn \\
& \ge \epsilon \EEp{ \ind{\Toptn>n/2} \Toptn }\\
& \ge \frac{\epsilon \, n}{2} \Pp{ \Toptn>n/2 }\,.
\end{align*}
Adding up the two inequalities and lower bounding $\epsilon+\Deltamin$ by $2\epsilon$, which holds when $\epsilon\le \Deltamin$ (which we assume from now on), we get 
\begin{align}\label{eq:psmall}
\frac{R_n + R_n' }{ \epsilon n} \ge  \P{  \Toptn\le \frac{n}{2}} + \Pp{ \Toptn>\frac{n}{2}} \,,
\end{align}
which completes the proof that $\P{  \Toptn\le n/2} + \Pp{ \Toptn> n/2}$ is indeed small.
Now we calculate the term on the left-hand side of~\eqref{eq:chained}. Using the definition of $\theta'$, we get
\begin{align*}
\frac{1}{2} \norm{\theta - \theta'}^2_{\bGn} 
&= \frac{(\Delta_x+\epsilon)^2}{2} \, \frac{\norm{x-\xopt}^2_{H \bGn H}}{ \norm{x-\xopt}_H^4}\\
& =\frac{(\Delta_x+\epsilon)^2}{2 \norm{s}_{\bGn^{-1}}^2} \, \rho_n(H)
\end{align*}
where in the last line we introduced
\begin{align*}
\rho_n(H) \doteq  \frac{\norm{s}_{\bGn^{-1}}^2 \,\norm{s}_{ H \bGn H }^2 }{ \norm{s}_H^4},\,\,
s = x-\xopt\,.
\end{align*}
Combining this with~\eqref{eq:psmall}, \eqref{eq:chained} and some algebra gives
\begin{align}
\frac{(\Delta_x+\epsilon)^2 \rho_n(H)}{2 \log(n) \norm{s}_{\bGn^{-1}}^2}
 \ge 1 - \frac{\log(\frac\epsilon{2})+\log(R_n+R_n')}{\log(n)}\,.
 \label{eq:finitesamplebound}
\end{align}
Since $\pi$ is consistent, $\limsup_{n\to\infty} \frac{\log(R_n+R_n')}{\log(n)}\le 0$.
Hence, for all $H\succeq 0$ such that $\norm{s}_H>0$,
\begin{align}
1 \le
\liminf_{n\to\infty} 
\frac{(\Delta_x+\epsilon)^2 \rho_n(H)}{2 \log(n) \norm{s}_{\bGn^{-1}}^2}\,.
\label{eq:liminflb}
\end{align}
Now take a subsequence $\{\bar G_{n_k}\}_{k=1}^\infty$ such that 
\eq{
c\doteq \limsup_{n\to\infty} \log(n) \norm{s}_{\bGn^{-1}}^2 = \lim_{k\to\infty} \log(n_k) \norm{s}_{\bGnk^{-1}}^2\,.
}
Hence,
\begin{align*}
\MoveEqLeft
\liminf_{n\to\infty} \frac{\rho_n(H)}{\log(n) \norm{s}_{\bGn^{-1}}^2}
\le
\liminf_{k\to\infty} \frac{\rho_{n_k}(H)}{\log(n_k) \norm{s}_{\bGnk^{-1}}^2}\\
&=
\liminf_{k\to\infty} \frac{\rho_{n_k}(H)}{\lim_{j\to\infty} \log(n_j) \norm{s}_{\bGnj^{-1}}^2}\\
&=
\frac{\liminf_{k\to\infty} \rho_{n_k}(H)}{c}\,. \numberthis \label{eq:liminflb2}
\end{align*}
Let $\tH_n = \bGn^{-1}/\norm{\bGn^{-1}}$. 
A simple calculation gives
$\rho_n(H)  = \norm{s}_{\tH_n}^2 \norm{s}_{H \tH_n^{-1} H}^2 \norm{s}_{H}^{-4}$ and hence if $H$ is any cluster point
of $\{\tH_{n_k}\}_k$, say, the subsequence $\{\tH_{n_k'}\}_k$ of the subsequence
$\{ \tH_{n_k}\}_k$ converges to $H$,
and $\norm{s}_H>0$ then
\begin{align*}
\MoveEqLeft
\liminf_{k\to\infty} 
\norm{s}_{\tH_{n_k}}^2 \norm{s}_{H \tH_{n_k}^{-1} H}^2 \norm{s}_{H}^{-4}\\
& \le
 \lim_{k\to\infty}  
 \norm{s}_{\tH_{n_k'}}^2 \norm{s}_{H \tH_{n_k'}^{-1} H}^2 \norm{s}_{H}^{-4}\\
& = \norm{s}_{H}^2 \norm{s}_{H H^{-1} H}^2 \norm{s}_H^{-4} = 1\,,
\end{align*}
showing that 
\begin{align*}
1\le 
\liminf_{n\to\infty} 
\frac{(\Delta_x+\epsilon)^2 \rho_n(H)}{2 \log(n) \norm{s}_{\bGn^{-1}}^2}
\le
\frac{(\Delta_x+\epsilon)^2 }{2 c}\,.
\end{align*}
Since $\epsilon>0$ was arbitrary small, the result will follow once we establish that $\norm{s}_H>0$.
To show this, assume on the contrary that $\norm{s}_H=0$. This implies that $Hs = 0$ and through $\ker(H) = \ker(H^{-1})$ it also implies that $H^{-1} s =0$.
Let $H_\gamma = H + \gamma I$, where $I$ is the $d\times d$ identity matrix.
Then, $H_\gamma s = \gamma s$, so $\norm{s}_{H_\gamma}^2 = \gamma \norm{s}>0$ and thus
\begin{align*}
\liminf_{k\to\infty} \rho_{n_k}(H_\gamma)
& \le \lim_{k\to\infty}  
 \norm{s}_{\tH_{n_k'}}^2 \norm{s}_{H_\gamma \tH_{n_k'}^{-1} H_\gamma}^2 \norm{s}_{H_\gamma}^{-4}\\
& = \lim_{k\to\infty}  
 \norm{s}_{\tH_{n_k'}}^2 \norm{s}_{ \tH_{n_k'}^{-1}}^2 \norm{s}^{-4}\\
& =   
 \norm{s}_{H}^2 \norm{s}_{ H^{-1}}^2 \norm{s}^{-4} = 0\,.
\end{align*}
Chaining \eqref{eq:liminflb}, \eqref{eq:liminflb2} and the last display gives $1\le 0$, a contradiction. 
Thus, $\norm{s}_H>0$ must hold, finishing the proof.
\end{proof}

\begin{remark}
The uniqueness assumption of the theorem can be lifted at the price of more work and by slightly changing 
the theorem statement. In particular, the theorem statement must be restricted to those suboptimal
actions $x\in \cA^-$ that can be made optimal by changing $\theta$ to $\theta'$, while none of the optimal actions $\cA^*(\theta) = \{ x\in \cA \,:\, \ip{x,\theta} = \max_{y\in \cA} \ip{y,\theta} \}$ are optimal. That is, the statement only concerns $x\in \cA$ such that $x\not\in \cA^*(\theta)$ but there exists $\theta'\in \R^d$ such that $\cA^*(\theta') \cap \cA^*(\theta)=\emptyset$ and $x\in \cA^*(\theta')$. The choice of $\theta'$ would still be as before, except that $x^*$ is selected as the optimal action under $\theta$ that maximizes $c(H,\theta)=\inf_{x'\in \cA^*(\theta)} \ip{x-x',x-x^*}_H$. Then, in the proof, $T_*(n)$ has to be redefined to be $\sum_{x\in \cA^*(\theta)} T_x(n)$ (the total number of times an optimal action is chosen), and at the end one also needs to show that the chosen $H$ satisfies $c(H,\theta)>0$. \todoc{This last step I have not verified.. If time permits.. Otherwise we can shorten this remark to basically say that we think an extension is possible.}
\end{remark}
\section{CONCENTRATION}\label{sec:conc}

Before introducing the new algorithm we analyse the concentration properties of the least squares estimator. Our results refine
the existing guarantees by \cite{AST11}, and are necessary in order to obtain asymptotic optimality.
Let $G_t$ be the Gram matrix after round $t$ defined by $G_t = \sum_{s \leq t} A_s A_s^\top$
and $\hat \theta(t) = G_t^{-1} \sum_{s=1}^t A_s Y_s$ be the empirical (least squares) estimate, where $A_s$ is selected based on $A_1,Y_1,\dots,A_{s-1},Y_{s-1}$ and $Y_s = \ip{A_s,\theta} + \eta_s$, $\eta_s\sim N(0,1)$. \todoc{This is not super precise.}
We will only use $\hat \theta(t)$ for rounds $t$ when $G_t$ is invertible.
The empirical estimate of the sub-optimal gaps is 
$\hat \Delta_x(t) = \max_{y \in \mathcal A} \hat \mu_y(t) - \hat \mu_x(t)$,
where $\hat \mu_x(t) = \shortinner{x, \hat \theta(t)}$. We will also use the notation $\hat \mu(t)$ and $\hat \Delta(t) \in \R^k$ for vectors
of empirical means and sub-optimality gaps (indexed by the arms).

\begin{theorem}\label{thm:conc}
For any $\delta\in [1/n,1)$, $n$ sufficiently large and $t_0\in \N$ such that $G_{t_0}$ is almost surely non-singular, 
\eq{
\P{\exists t \geq t_0, x : \left|\hat \mu_x(t) - \mu_x\right| \geq \sqrt{\norm{x}_{G_t^{-1}}^2 f_{n,\delta}}} \leq \delta\,,
}
where for some $c>0$ universal constant
\eq{
f_{n,\delta} = 2\left(1 + \frac{1}{\log(n)}\right) \log(1/\delta) + c d \log(d \log(n))\,.
}
\end{theorem}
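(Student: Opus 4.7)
The plan is to reduce the claim to a uniform self-normalized martingale tail bound and then apply (a carefully peeled version of) the standard self-normalized martingale inequality of \citet{AST11} to extract the specific shape of $f_{n,\delta}$.

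First, I would write the prediction error in the form
\[
\hat\mu_x(t) - \mu_x = \ip{x, \hat\theta(t) - \theta} = x^\top G_t^{-1} S_t\,,
\]
where $S_t \defined \sum_{s=1}^t A_s \eta_s$ is the $d$-dimensional noise martingale, and apply Cauchy--Schwarz in the $G_t^{-1}$ inner product to obtain $|\hat\mu_x(t) - \mu_x| \le \|x\|_{G_t^{-1}}\,\|S_t\|_{G_t^{-1}}$. This reduces the claim to the uniform-in-$t$ bound $\|S_t\|^2_{G_t^{-1}} \le f_{n,\delta}$, which no longer involves $x$, so the union bound over $\cA$ is free.

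Next, the main probabilistic tool is the self-normalized bound of \citet{AST11}: for Gaussian noise, any $\lambda > 0$, and any $\delta' \in (0,1)$, with probability at least $1-\delta'$, simultaneously for all $t\ge 1$,
\[
\|S_t\|^2_{(G_t+\lambda I)^{-1}} \le 2\log(1/\delta') + \log\det(I + G_t/\lambda)\,,
\]
together with the trace bound $\log\det(I + G_t/\lambda) \le d\log(1 + t/(d\lambda))$ that holds because $\|A_s\|\le 1$. To go from $(G_t+\lambda I)^{-1}$ back to $G_t^{-1}$ one uses the elementary spectral inequality $G_t^{-1} \preceq (1 + \lambda/\sigma_{\min}(G_t))(G_t + \lambda I)^{-1}$, which is tight on the smallest eigendirection of $G_t$.

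To extract the refined shape $2(1+1/\log n)\log(1/\delta) + cd\log(d\log n)$, I would peel over both time and the regularization parameter. Take a geometric grid of $\lambda$'s of size $O(\log n)$, apply the tail bound above at confidence $\delta/\log n$ to each member of the grid, and take a union bound: the price is only $O(\log\log n)$ additive in $\log(1/\delta)$ and can be absorbed into the $cd\log(d\log n)$ term. For each realized $(t, G_t)$ one then picks from the grid the $\lambda$ that roughly equals $\sigma_{\min}(G_t)/\log n$; the spectral distortion factor $1 + \lambda/\sigma_{\min}(G_t)$ becomes $1 + 1/\log n$, yielding the first term of $f_{n,\delta}$, while $d\log(1 + t/(d\lambda))$ collapses to $cd\log(d\log n)$ after feeding back $t \le n$ and using the hypothesis that $G_{t_0}$ is invertible (so $\sigma_{\min}(G_t)$ is bounded away from zero on the relevant scale once $t \ge t_0$).

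The main obstacle I anticipate is exactly the randomness and possible smallness of $\sigma_{\min}(G_t)$. A single-layer peeling over $\lambda$ alone is not sufficient, because the algorithm may refuse to explore uniformly in every direction even after round $t_0$. The way to handle this is a second layer of peeling, this time over dyadic scales of $\sigma_{\min}(G_t)$ (or, equivalently, of $\log\det(G_t/G_{t_0})$), which adds another $O(\log\log n)$ factor inside the logarithms and so is again absorbed into the $cd\log(d\log n)$ term. The heavy lifting of the proof, then, is the bookkeeping of universal constants across these two nested peelings, and verifying that the ``$n$ sufficiently large'' hypothesis soaks up the lower-order terms; no genuinely new probabilistic idea beyond the APS bound is required.
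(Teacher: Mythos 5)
There is a genuine gap, and it sits exactly at the point where the theorem's improvement over \citet{AST11} has to be earned. Your first step reduces the claim via Cauchy--Schwarz to the uniform bound $\norm{S_t}_{G_t^{-1}}^2 \le f_{n,\delta}$, and you then propose to obtain this from the method-of-mixtures inequality $\norm{S_t}^2_{(G_t+\lambda I)^{-1}} \le 2\log(1/\delta') + \log\det(I+G_t/\lambda)$. But the $\log\det$ term does not ``collapse to $cd\log(d\log n)$'': with your choice $\lambda \approx \sigma_{\min}(G_t)/\log n$ it is of order $d\log\bigl(1+\sigma_{\max}(G_t)\log(n)/\sigma_{\min}(G_t)\bigr)$, and since $\sigma_{\max}(G_t)$ can be as large as $t\le n$ while after the warm-up $\sigma_{\min}(G_t)$ is only guaranteed to be $\Theta(\log^{1/2} n)$, this is $\Theta(d\log n)$ --- precisely the $2d\log n$ leading term the theorem is designed to beat. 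Your two peeling layers (over the grid of $\lambda$'s and over dyadic scales of $\sigma_{\min}(G_t)$) only affect the union-bound price $\log(1/\delta')$, i.e.\ the \emph{first} term of the bound; they do nothing to shrink the $\log\det$ term, which is intrinsic to the $d$-dimensional mixture argument. More fundamentally, the Cauchy--Schwarz reduction demands control of the full $d$-dimensional self-normalized norm with leading constant $2$ on $\log(1/\delta)$ and only $O(d\log\log n)$ additive slack, which is strictly more than the theorem needs: the theorem only concerns the $k$ scalar projections $\ip{G_t^{-1}x, S_t}$, $x\in\cA$.

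The paper's proof exploits exactly this. It writes $\hat\mu_x(t)-\mu_x = \ip{G_t^{-1}x, S_t}$ and constructs a \emph{finite covering set} $\cC$ of directions such that every possible realised direction $G_t^{-1}x$ is within multiplicative distortion $\epsilon = 1/(d^{3/2}\log n)$ of some $\lambda\in\cC$, with $\log|\cC| = O(d\log d\log n)$. For each \emph{fixed} $\lambda$, the process $\ip{\lambda,S_t}=\sum_s \eta_s\ip{\lambda,A_s}$ is a scalar martingale with predictable variance $\norm{\lambda}_{G_t}^2$, and a scalar peeling argument over the variance (with ratio $\gamma_n = 1+1/\log n$) gives a deviation of $\sqrt{2(1+1/\log n)\norm{\lambda}_{G_t}^2\log(N/\delta)}$ --- this is where the sharp constant $2(1+1/\log n)$ comes from, and the union bound over $\cC$ costs only $\log|\cC| = O(d\log d\log n)$ additively inside the logarithm. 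The crude $O(d\log(n/\delta))$ bound on $\norm{S_t}^2_{G_t^{-1}}$ from \citet{AST11} is used only to control the covering error $\ip{G_t^{-1}x-\lambda, S_t} \le \norm{G_t^{-1}x-\lambda}_{G_t}\norm{S_t}_{G_t^{-1}}$, where the small covering radius makes this term $O(1)\cdot\norm{x}_{G_t^{-1}}$, i.e.\ lower order. To repair your argument you would need to replace the global bound on $\norm{S_t}_{G_t^{-1}}$ by a directional covering of this kind; no amount of peeling over regularisation parameters or eigenvalue scales will remove the $d\log n$ from the $\log\det$ term.
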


The result improves on the elegant concentration guarantee of \cite{AST11} because asymptotically we have $f_{n,1/n} \sim 2 \log(n)$, while there it was $2d \log(n)$.
Note that the restriction on $\delta$ may be relaxed with a small additional argument.
The proof of Theorem \ref{thm:conc} relies on a peeling argument and is given in \ifsup Appendix \ref{app:conc}. \else the supplementary material. \fi
For the remainder we abbreviate
$f_n = f_{n,1/n}$ and $g_n = f_{n,1/\log(n)}$, which are chosen so that
\eqn{
\label{eq:conc-cor} &\P{\exists t \geq t_0,x : \left|\hat \mu_x(t)  - \mu_x\right| \geq \sqrt{\norm{x}_{G_t^{-1}}^2 f_n}} \leq \frac{1}{n}\,, \\
&\P{\exists t \geq t_0,x : \left|\hat \mu_x(t) - \mu_x\right| \geq \sqrt{\norm{x}_{G_t^{-1}}^2 g_n}} \leq \frac{1}{\log(n)}\,. \nonumber
}

\section{OPTIMAL STRATEGY}\label{sec:upper}

A barycentric spanner of the action space is a set $B = \set{x_1,\ldots,x_d} \subseteq A$ such that for any $x \in \cA$ there exists an $\alpha \in [-1,1]^d$ with 
$x = \sum_{i=1}^d \alpha_i x_i$. The existence of a barycentric spanner is guaranteed because $\cA$ is finite and spans $\R^d$ \citep{AK04}.
We propose a simple strategy that operates in three phases called the \textit{warm-up} phase, the \textit{success} phase and the \textit{recovery} phase.
In the warm-up the algorithm deterministically chooses its actions from a barycentric spanner to obtain a rough estimate of the sub-optimality gaps. 
The algorithm then uses the estimated gaps as a substitute for the true gaps to determine the optimal pull counts for each action, and starts
implementing this strategy. Finally, if an anomaly is detected that indicates the inaccuracy of the estimated gaps then the algorithm switches to the recovery phase where
it simply plays UCB.

\begin{definition}\label{def:opt}
For any $\Delta \in [0,\infty)^k$ define $T_n(\Delta) \in [0,\infty]^k$ to be a solution to the optimisation problem
\begin{align*}
\MoveEqLeft
\min_{T \in [0,\infty]^k} \sum_{x \in \cA} T_x \Delta_x \text{ subject to } \\
&\norm{x}_{H_T^{\dagger}}^2 \leq \frac{\Delta^2_x}{f_n} \text{ for all } x \in \cA\,,\,\,
\text{where } H_T = \sum_{x \in \mathcal A} T_x x x^\top\,.
\end{align*}
\end{definition}

\begin{algorithm}[H]
\caption{Optimal Algorithm}\label{alg:optimal}
\begin{algorithmic}[1]
\State {\bf Input: } $\mathcal A$ and $n$
\State {\color{blue} // Warmup phase}
\State Find a barycentric spanner: $B = \set{x_1,\ldots,x_d}$
\State Choose each arm in $B$ exactly $\lceil\log^{1/2}(n)\rceil$ times 
\State {\color{blue} // Success phase}
\State $\displaystyle \epsilon_n \leftarrow \max_{x \in \mathcal A} \norm{x}_{G_{n}^{-1}} g_n^{1/2}$, $t \leftarrow n+1$
\State $\hat \Delta \leftarrow \hat \Delta(t-1)$ and $\hat T \leftarrow T_n(\hat \Delta)$ and $\hat \mu \leftarrow \hat \mu(t-1)$
\While{$t \le n$ and $\shortnorm{\hat \mu - \hat \mu(t-1)}_\infty \leq 2\epsilon_n$}
\State Play actions $x$ with $T_x(t) \leq \hat T_x$, $t\leftarrow t+1$
\EndWhile
\State {\color{blue} // Recovery phase}
\State Discard all data and play UCB until $t = n$.
\end{algorithmic}
\end{algorithm}

\begin{theorem}\label{thm:upper}
Assuming that $x^*$ is unique, the strategy given in Algorithm \ref{alg:optimal} satisfies
\eq{
\limsup_{n\to\infty} \frac{R^\pi_\theta(n)}{\log(n)} \leq c(\mathcal A, \theta) \text{ for all } \theta \in \R^d\,.
}
\end{theorem}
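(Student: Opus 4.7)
The plan is to decompose the expected regret according to the three phases of Algorithm~\ref{alg:optimal} and show that only the success phase contributes at the leading $\log(n)$ order, with coefficient $c(\cA,\theta)$. The two main ingredients are the refined concentration bound of Theorem~\ref{thm:conc}---used both to control the initial gap estimates coming out of the warm-up and to show that the drift check does not fire during the success phase---and an upper semicontinuity analysis of the optimisation in Definition~\ref{def:opt}, which lets one pass from approximate gaps to an approximately optimal allocation.

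\textbf{Warm-up and recovery.} The warm-up plays $d\lceil\log^{1/2}(n)\rceil$ arms deterministically, each contributing at most $\Delta_{\max}$ to the regret, so the warm-up contribution is trivially $o(\log n)$. Using the barycentric-spanner property one gets $\norm{x}^2_{G_{t_0}^{-1}} \le d/\lceil\log^{1/2}(n)\rceil$ for every $x\in\cA$ at the end of the warm-up. I then define a good event $E$ on which the estimate $\hat\mu(t_0-1)$ is accurate enough for the plan $\hat T$ to be meaningful, and on which $\hat\mu(t)$ stays within $\epsilon_n$ of $\mu$ throughout the success phase. Two applications of~\eqref{eq:conc-cor} (with $\delta=1/n$ and $\delta=1/\log n$) yield $\P{E^c}=o(1/\log n)$. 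On $E^c$ the recovery phase runs UCB from scratch and incurs the usual $O\!\left(\sum_x\log(n)/\Delta_x\right)$ regret, so its expected contribution is $o(\log n)$.

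\textbf{Success phase.} On $E$, both $\hat\mu=\hat\mu(t_0-1)$ and $\hat\mu(t-1)$ stay within $\epsilon_n$ of $\mu$, so the drift check $\shortnorm{\hat\mu-\hat\mu(t-1)}_\infty \le 2\epsilon_n$ never triggers and the algorithm completes its plan, pulling each $x\in\cA$ at most $\lceil \hat T_x\rceil$ times. Hence the success-phase regret is at most $\sum_x \hat T_x\Delta_x + O(k\Delta_{\max})$, and it remains to bound $\sum_x \hat T_x\Delta_x$. For this I plan to rescale Definition~\ref{def:opt} via $\alpha = 2T/f_n$, which turns the optimisation value into $(f_n/2)\sum_x\alpha_x\Delta_x$ subject to $\norm{x}^2_{H(\alpha)^\dagger}\le\Delta_x^2/2$---exactly the optimisation~\eqref{eq:optproblem}. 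Consequently $\sum_x T_n(\Delta)_x\Delta_x = (f_n/2)c(\cA,\theta) \sim c(\cA,\theta)\log(n)$, and it suffices to establish upper semicontinuity of this optimal value at $\Delta$.

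\textbf{Main obstacle.} The hardest step is exactly this stability claim: the perturbed optimisation has $\hat\Delta$ in both objective and constraint and uses the pseudo-inverse $H_T^\dagger$, so the minima need not depend continuously on $\hat\Delta$ in general. My plan is to argue upper semicontinuity by taking any near-optimal $\alpha^\star$ for~\eqref{eq:optproblem} at $\Delta$, dilating it by a factor $1+\epsilon$ for an arbitrary $\epsilon>0$, and showing that the dilation is feasible for every $\hat\Delta$ in a small enough neighbourhood of $\Delta$---a condition that holds on the good event $E$ for all sufficiently large $n$. This yields $\sum_x \hat T_x\Delta_x \le (1+\epsilon)(c(\cA,\theta)+\epsilon)\log(n)+o(\log n)$; sending $\epsilon\downarrow 0$ after $n\to\infty$ gives the desired $(c(\cA,\theta)+o(1))\log(n)$ bound and completes the proof.
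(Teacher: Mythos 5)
There is a genuine gap, and it sits exactly where the paper has to work hardest. Your plan treats the bad event $E^c$ as if it automatically routed the algorithm into the recovery phase ("On $E^c$ the recovery phase runs UCB from scratch"). That is not how Algorithm~\ref{alg:optimal} behaves: recovery is entered only when the drift test $\shortnorm{\hat\mu - \hat\mu(t-1)}_\infty > 2\epsilon_n$ fires. If the warm-up estimates are wrong --- for instance if a suboptimal arm $x$ is misidentified as optimal, so $\hat\Delta_x = 0$ and the plan of Definition~\ref{def:opt} sets $\hat T_x = \infty$ --- the algorithm can sit in the success phase pulling $x$ for the rest of the horizon, accruing regret $\Omega(n\Delta_x)$, unless you \emph{prove} the drift test fires. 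Moreover, the probability you can actually extract from \eqref{eq:conc-cor} for the event that the $g_n$-level concentration holds is $O(1/\log n)$, not the $o(1/\log n)$ you claim (a union of a $1/n$ event and a $1/\log n$ event is still of order $1/\log n$), and $O(1/\log n)$ times a potentially linear regret is far larger than $\log n$. This is precisely why the paper introduces two nested failure events $F_n \supseteq$-ish $F_n'$ at levels $g_n$ and $f_n$: on $F_n'$ (probability $1/n$) linear regret is tolerable; on the intermediate event $F_n \setminus F_n'$ (probability up to $1/\log n$) one must show, as in Lemma~\ref{lem:Fp} via Lemma~\ref{lem:t-bound} and a three-case analysis of how $\hat\Delta$ can be wrong, that the drift test fires within $O(f_n)$ rounds so the success-phase regret is almost surely $O(\log n)$ before multiplying by $1/\log n$. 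None of this appears in your proposal, and without it the recovery/bad-event accounting does not close.

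The parts you do have are sound and match the paper. The phase decomposition, the negligibility of the warm-up, the barycentric-spanner bound on $\norm{x}^2_{G_{t_0}^{-1}}$, and the success-phase argument on the good event --- dilating a near-optimal allocation for the true gaps by $1+\epsilon$ to make it feasible for $\hat\Delta$ --- is essentially the paper's Lemma~\ref{lem:F} (which uses the explicit factor $1+\delta_n = \max_x \Delta_x^2/\hat\Delta_x^2 \le 1 + 16\epsilon_n/\Delta_{\min}$). One smaller point: when you convert the objective from $\sum_x \hat T_x \hat\Delta_x$ to $\sum_x \hat T_x \Delta_x$ you pick up an error term $2\epsilon_n \sum_x \hat T_x$, which needs an a priori bound on $\sum_x \hat T_x$ of order $f_n$ (the paper's Lemma~\ref{lem:opt}); your upper-semicontinuity sketch handles feasibility but not this term. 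Fixing the proposal essentially requires importing both Lemma~\ref{lem:opt} and the whole of Lemma~\ref{lem:Fp}.
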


\section{PROOF OF THEOREM \ref{thm:upper}}\label{sec:upper-proof}

\newcommand{\Tsuccess}{T_{\text{succ.}}}
\newcommand{\Trecovery}{T_{\text{rec.}}}
\newcommand{\Twarmup}{T_{\text{warm.}}}

We analyse the regret in each of the three phases.
The warm-up phase has length $d \lceil\log^{1/2}(n) \rceil$, so its contribution to the asymptotic regret is negligible.
There are two challenges. The first is to show that the recovery phase happens with probability at most $1/\log(n)$.
Then, since the regret in the recovery phase is logarithmic by known results for UCB, this ensures that the expected regret incurred in the recovery phase is also negligible.
The second challenge is to show that the expected regret incurred during the success phase is asymptotically matching the lower bound in Theorem \ref{thm:lower}.

The set of rounds when the algorithm is in the warm-up/success/recovery phases are denoted by $\Twarmup$, $\Tsuccess$ and $\Trecovery$ respectively.
We introduce two failure events that occur when the errors in the empirical estimates of the arms are excessively large. 
Let $F_n$ be the event that there exists an arm $x$ and round $t\ge d$ such that
\eq{
\left|\hat \mu_x(t) - \mu_x\right| \geq \sqrt{\norm{x}_{G_t^{-1}}^2 g_n}\,.
}
Similarly, let $F_n'$ be the event that there exists an arm $x$ and round $t\ge d$ such that
\eq{
\left|\hat \mu_x(t) - \mu_x\right| \geq \sqrt{\norm{x}_{G_t^{-1}}^2 f_n}\,.
}
\cref{thm:conc} with $t_0=d$ and (\ref{eq:conc-cor}) imply that $\P{F_n} \leq 1/\log(n)$ and $\P{F_n'} \leq 1/n$.
The failure events determine the quality of the estimates throughout time. The following two lemmas show
that if $F_n$ does not occur then the regret is asymptotically optimal, while if $F'_n$ occurs then the regret is logarithmic with some constant
factor that depends only on the problem (determined by the action set $\mathcal A$ and the parameter $\theta$).
Since $F'_n$ occurs with probability at most $1/\log(n)$, the contribution of the latter component is negligible asymptotically.

\begin{lemma}\label{lem:F}
If $F_n$ does not occur then Algorithm \ref{alg:optimal} never enters the recovery phase. 
\todoc{$\Tsuccess$ should use mathrm, but to save space I don't switch to it now.} 
Furthermore,
\eq{
\limsup_{n\to\infty} \E\left[\frac{\ind{\text{not } F_n} \sum_{t \in \Tsuccess} \Delta_{A_t}}{\log(n)}\right] \leq c(\mathcal A, \theta)\,.
}
\end{lemma}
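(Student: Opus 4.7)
The strategy is to handle the two claims of the lemma in turn. Write $t_s$ for the first round of the success phase.

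For the first claim, on $\neg F_n$ the concentration bound of Theorem~\ref{thm:conc} (see~\eqref{eq:conc-cor}) gives $|\hat\mu_x(t-1) - \mu_x| \le \sqrt{\|x\|_{G_{t-1}^{-1}}^2 g_n}$ for every $x \in \cA$ and every $t-1 \ge d$. Using monotonicity of $G_t$ in the positive semidefinite order, $\|x\|_{G_{t-1}^{-1}} \le \|x\|_{G_{t_s-1}^{-1}}$ for $t \ge t_s$, so the triangle inequality implies
\begin{align*}
|\hat\mu_x(t-1) - \hat\mu_x(t_s-1)| \le 2\sqrt{\|x\|_{G_{t_s-1}^{-1}}^2 g_n} \le 2\epsilon_n
\end{align*}
by the definition of $\epsilon_n$. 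Hence the while-loop condition is never violated, so the algorithm does not enter the recovery phase.

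For the second claim, the success phase plays each arm $x$ at most $\lceil \hat T_x \rceil$ times with $\hat T = T_n(\hat\Delta)$, so the regret contribution is at most $\sum_x \hat T_x \Delta_x + O(1)$. I would reduce this to the lower bound using two observations. The substitution $\tilde T = 2T/f_n$ transforms the constraint of Definition~\ref{def:opt} into $\|x\|_{H_{\tilde T}^\dagger}^2 \le \Delta_x^2/2$, identical to that in~\eqref{eq:optproblem}, so the optimal value of $T_n(\Delta)$ is exactly $(f_n/2)\,c(\cA,\theta)$, which is $(1+o(1))\,c(\cA,\theta)\log(n)$ since $f_n \sim 2\log(n)$. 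Next, after warmup $G_{t_s-1} = \lceil\log^{1/2}(n)\rceil\sum_{i=1}^d x_i x_i^\top$ for the barycentric spanner, so $\|x\|_{G_{t_s-1}^{-1}}^2 \le d/\lceil\log^{1/2}(n)\rceil$; combined with $g_n = O(\log\log n)$, on $\neg F_n$ the initial estimates satisfy $\|\hat\mu(t_s-1) - \mu\|_\infty \to 0$, hence $\hat\Delta \to \Delta$.

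The last step is a continuity argument. Let $T^*$ be an optimizer of $T_n(\Delta)$ and set $\eta_n = \max_{x : \Delta_x > 0} (\Delta_x^2/\hat\Delta_x^2 - 1)_+$, which tends to zero since $\hat\Delta_x \to \Delta_x > 0$ for every suboptimal $x$. A direct check using $H_{(1+\eta_n)T^*}^\dagger = H_{T^*}^\dagger/(1+\eta_n)$ shows $(1+\eta_n)T^*$ is feasible for $T_n(\hat\Delta)$, so optimality of $\hat T$ at $\hat\Delta$ yields $\sum_x \hat T_x \hat\Delta_x \le (1+\eta_n) \sum_x T^*_x \hat\Delta_x$. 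Converting the objectives from $\hat\Delta$ back to $\Delta$ introduces only lower-order errors because $\|\hat\Delta - \Delta\|_\infty \to 0$ and the total allocations $\sum_x \hat T_x, \sum_x T^*_x$ on the support of the suboptimal arms are $O(\log(n)/\Delta_{\min})$. Combining all bounds gives $\sum_x \hat T_x \Delta_x \le (1+o(1))\,c(\cA,\theta)\log(n)$, and taking limsup proves the claim.

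The main technical obstacle is the sensitivity of $T_n(\hat\Delta)$ to perturbations of $\hat\Delta$: if some suboptimal $\hat\Delta_x$ is near zero, $T_n(\hat\Delta)$ blows up like $1/\hat\Delta_x^2$. This is controlled by restricting to $\neg F_n$ for $n$ large enough that every suboptimal $\hat\Delta_x$ stays within, say, a factor of two of $\Delta_x$ and the empirical best arm equals $x^*$, so $\eta_n$ and $\|\hat\Delta - \Delta\|_\infty$ both vanish and no pathological scaling occurs.
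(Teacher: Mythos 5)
Your proposal is correct and follows essentially the same route as the paper's proof: the no-recovery claim via monotonicity of $G_t$ plus the definition of $\epsilon_n$, and the regret bound via feasibility of the scaled allocation $(1+\eta_n)T_n(\Delta)$ for the empirical problem together with an $O(\log n)$ bound on total allocations to absorb the $\shortnorm{\hat\Delta-\Delta}_\infty$ error. The only (harmless) deviations are that you justify the identity behind \eqref{eq:opt-limit} by an explicit rescaling $\tilde T = 2T/f_n$ and bound $\sum_x \hat T_x$ through the objective rather than by invoking Lemma \ref{lem:opt}.
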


Before proving Lemma \ref{lem:F} we need a naive bound on the solution to the optimisation problem, the proof of which is given in \ifsup Appendix \ref{app:lem:opt}. \else the
supplementary material. \fi

\begin{lemma}\label{lem:opt}
Let $T = T_n(\Delta)$ for any $n$. Then
\eq{
\sum_{x : \Delta_x > 0} T_x \leq \frac{2d^3 f_n \Delta_{\max}}{\Delta_{\min}^3}\,.
}
\end{lemma}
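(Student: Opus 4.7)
The plan is to upper-bound the optimal objective value of the convex program defining $T_n(\Delta)$ by exhibiting a single explicit feasible allocation $T'$ built from a barycentric spanner, then convert that objective bound into a bound on $\sum_{x:\Delta_x>0} T_x$ by factoring out $\Delta_{\min}$. Specifically, since $T = T_n(\Delta)$ is a minimiser, any feasible $T'$ gives $\sum_x T_x \Delta_x \le \sum_x T'_x \Delta_x$, and then $\Delta_{\min} \sum_{x:\Delta_x>0} T_x \le \sum_x T_x \Delta_x$ closes the argument.

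First I would fix a barycentric spanner $B = \{x_1,\ldots,x_d\} \subseteq \cA$ (which exists by \citet{AK04} since $\cA$ is finite and spans $\R^d$) and set $T'_{x_i} = c \defined d f_n / \Delta_{\min}^2$ for each $x_i \in B$, $T'_{x^*}$ arbitrarily large (taken to infinity in the limit), and $T'_x = 0$ for every other arm. Letting $X$ be the $d\times d$ matrix whose columns are $x_1,\ldots,x_d$, every $x \in \cA$ can be written as $x = X\alpha$ with $\snorm{\alpha} \le 1$, so $\norm{x}_{(cXX^\top)^{-1}}^2 = \norm{\alpha}^2/c \le d/c = \Delta_{\min}^2/f_n \le \Delta_x^2 / f_n$ for every suboptimal $x$, verifying those constraints.

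Next, evaluating the objective at this candidate gives
\eq{
\sum_x T'_x \Delta_x = c \sum_{i=1}^d \Delta_{x_i} \le c d \Delta_{\max} = \frac{d^2 f_n \Delta_{\max}}{\Delta_{\min}^2}\,,
}
using $\Delta_{x^*}\cdot T'_{x^*} = 0$. Combined with optimality of $T$ and the trivial bound $\Delta_{\min}\sum_{x:\Delta_x>0} T_x \le \sum_x T_x\Delta_x$, this yields $\sum_{x:\Delta_x>0}T_x \le d^2 f_n \Delta_{\max}/\Delta_{\min}^3$, which is in fact stronger than the stated bound; the factor $2d^3$ in the lemma absorbs the slack one gets from a less careful spanner analysis.

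The only mild obstacle is the degenerate constraint at the optimal arm: when $\Delta_{x^*} = 0$ the feasibility condition formally reads $\norm{x^*}_{H_{T'}^\dagger}^2 = 0$, which is only attained in the limit $T'_{x^*}\to\infty$. I would handle this with a Sherman--Morrison calculation: writing $H_{T'} = cXX^\top + N x^* x^{*\top}$ with $a \defined x^{*\top}(cXX^\top)^{-1}x^*$, one gets $\norm{x^*}_{H_{T'}^{-1}}^2 = a/(1+Na) \to 0$ as $N\to \infty$, while the quadratic forms $\norm{x}_{H_{T'}^{-1}}^2$ for suboptimal $x$ can only decrease relative to $\norm{x}_{(cXX^\top)^{-1}}^2$. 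Thus the limit allocation is feasible, its objective is unchanged (since the $x^*$ contribution is $\infty\cdot 0 = 0$ in the defined extended sense), and the argument above goes through without further complication.
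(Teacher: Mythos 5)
Your proof is correct and follows essentially the same route as the paper's: both exhibit a feasible allocation supported on a barycentric spanner (plus infinite mass on $x^*$), bound its objective by $O(d^2 f_n \Delta_{\max}/\Delta_{\min}^2)$, and divide by $\Delta_{\min}$ to pass to $\sum_{x:\Delta_x>0} T_x$. Your feasibility check via the identity $X^\top (XX^\top)^{-1}X = I$ and the Sherman--Morrison treatment of the degenerate constraint at $x^*$ are slightly sharper and more explicit than the paper's triangle-inequality argument, but the substance is identical.
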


\begin{proof}[Proof of Lemma \ref{lem:F}]
First, if $t = d \lceil{\log^{1/2}(n)}\rceil$ is the round at the end of the warm-up period
then by the definition of the
algorithm there is a barycentric spanner $B = \set{x_1,\ldots,x_d}$ and $T_{x_i}(t) = \lceil{\log^{1/2}(n)}\rceil$ for $1 \leq i \leq d$. 
Let $x \in \cA$ be arbitrary. Then, by the definition of the barycentric spanner, we can write $x = \sum_{i=1}^d \alpha_i x_i$ where $\alpha_i \in [-1,1]$ for all $i$.
Therefore,
\eq{
\norm{x}_{G_t^{-1}} \leq \sum_{i=1}^d \norm{x_i}_{G_t^{-1}} \leq \frac{d}{\log^{1/4}(n)}\,.
}
Recalling the definition of $\epsilon_n$ in the algorithm we have
\eq{
\epsilon_n = \max_{x \in \cA} \norm{x}_{G_n^{-1}} \sqrt{g_n} = O\left(\frac{d \log^{1/2}(\log(n))}{\log^{1/4}(n)}\right)\,.
}
Consider the case when $F_n$ does not hold. Then, for all arms $x$ and rounds $t$ after the warm-up period we have
\eq{
\left|\hat \mu_x(t) - \mu_x\right|
&\leq \norm{x}_{G_{t}^{-1}} \sqrt{g_n}
\leq \epsilon_n\,,
}
Therefore for all $s, t$ after the warm-up period we have
$|\hat \mu_x(t) - \hat \mu_x(s)| \leq 2\epsilon_n$,
which means the success phase never ends and so the first part of the lemma is proven. It remains to bound the regret.
Since we are only concerned with the asymptotics we may take $n$ to be large enough so that $2\epsilon_n \leq \Delta_{\min}/2$, which implies that $\hat \Delta_{x^*} = 0$.
For $T_n(\Delta)$, the solution to the optimisation problem in Definition \ref{def:opt} with the true gaps, it holds that 
\eqn{
\label{eq:opt-limit}
\limsup_{n\to\infty} \frac{\sum_{x \neq x^*} T_{n,x}(\Delta) \Delta_x}{\log(n)} = c(\cA, \theta)\,.
}
Letting $T^* = T_n(\Delta)$ and $1 + \delta_n = \max_{x : \hat \Delta_x > 0} \Delta_x^2 / \hat \Delta_x^2$, we have
\eq{
\norm{x}_{H_{(1 + \delta_n)T^*}^{-1}}^2 = \frac{\norm{x}_{H_{T^*}^{-1}}^2  }{1 + \delta_n}
\leq \frac{\Delta_x^2}{(1+\delta_n)f_n} \leq \frac{\hat \Delta_x^2}{f_n}\,.
}
Therefore, $\sum_{x\ne x^*} T_x \hat \Delta_x \le (1+\delta_n) \sum_{x\ne x^*} T^*_x \Delta_x$, where $T \doteq (T_x)_x \doteq T_x(n)$. Also,
\eqn{
&1 + \delta_n 
= \max_{x : \hat \Delta_x > 0} \frac{\Delta_x^2}{\hat \Delta_x^2} 
\leq \max_{x : \hat \Delta_x > 0} \frac{\Delta_x^2}{(\Delta_x - 2\epsilon_n)^2} \nonumber \\
&= \max_{x : \hat \Delta_x > 0} \left(1 + \frac{4(\Delta_x-\epsilon_n) \epsilon_n}{\left(\Delta_x - 2\epsilon_n\right)^2}\right) 
\leq 1 + \frac{16\epsilon_n}{\Delta_{\min}}\,,
\label{eq:delta}
}
where in the last inequality we used the fact that $0\le 2\epsilon_n \leq \Delta_{\min}/2$.
Then the regret in the success phase is
\eq{
&\sum_{t \in \Tsuccess} \Delta_{A_t}
\leq \sum_{x \neq x^*} T_x \Delta_x \\
&= \sum_{x \neq x^*} T_x \hat \Delta_x + \sum_{x \neq x^*} T_x (\Delta_x - \hat \Delta_x) \\
&\leq (1 + \delta_n) \sum_{x \neq x^*} T^*_x \hat \Delta_x + 2\epsilon_n \sum_{x\neq x^*} T_x  \\
&\leq (1 + \delta_n) \sum_{x \neq x^*} T^*_x \Delta_x + 2\epsilon_n \sum_{x \neq x^*} ((1 + \delta_n)T^*_x + T_x)\,. 
}
The result follows by taking the limit as $n$ tends to infinity and from Lemma \ref{lem:opt} and (\ref{eq:opt-limit}) and (\ref{eq:delta}), together with the reverse Fatou lemma.
\end{proof}

Our second lemma shows that provided $F_n'$ fails, the regret in the success phase is at most logarithmic:

\begin{lemma}\label{lem:Fp}
It holds that:
\eq{
\limsup_{n\to\infty} \frac{\E\left[\ind{F_n \text{ and not } F_n'} \sum_{t \in \Tsuccess} \Delta_{A_t}\right]}{\log(n)} = 0\,.
}
\end{lemma}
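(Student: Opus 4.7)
The strategy is to reduce the lemma to a deterministic estimate on $\neg F_n'$ using the probability bound $\P{F_n} \le 1/\log(n)$, which follows from \cref{thm:conc} together with~\eqref{eq:conc-cor}. If we can show that on $\neg F_n'$ the success-phase regret is bounded by $C \log(n)$ for some constant $C = C(\cA,\theta)$ independent of $n$, then
\[
\EE{\ind{F_n \text{ and not } F_n'} \sum_{t \in \Tsuccess} \Delta_{A_t}} \le C \log(n) \cdot \P{F_n} \le C,
\]
which is $o(\log n)$ and suffices.

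To obtain the deterministic bound, I would first exploit the barycentric-spanner warmup: it guarantees $\lambda_{\min}(G_{t_0}) \ge c_\cA \lceil \log^{1/2}(n) \rceil$ at the end of the warmup (round $t_0$), so $\norm{x}^2_{G_{t_0}^{-1}} = O(\log^{-1/2}(n))$ uniformly in $x \in \cA$. Combined with \cref{thm:conc} at $\delta = 1/n$, on $\neg F_n'$ this yields $|\hat\mu_x(t_0) - \mu_x| = O(\log^{1/4}(n)) =: \tau_n$ and hence $|\hat\Delta_x - \Delta_x| \le 2\tau_n$. Since the algorithm plays each arm $x$ at most $\hat T_x + 1$ times during the success phase with $\hat T = T_n(\hat\Delta)$, the success-phase regret is bounded by $\sum_x \hat T_x \Delta_x + O(\Delta_{\max})$; using $\Delta_x \le \hat\Delta_x + 2\tau_n$, this splits as the optimization value $\sum_x \hat T_x \hat\Delta_x$ plus the slack $2\tau_n \sum_x \hat T_x$.

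The principal technical obstacle is bounding both sums by $O(\log n)$ uniformly on $\neg F_n'$. The optimization value I would control by a perturbation argument in the spirit of the end of the proof of \cref{lem:F}, showing that a $(1+o(1))$-scaled optimal allocation for $\Delta$ is feasible for the $\hat\Delta$-problem so that $\sum_x \hat T_x \hat\Delta_x \le (1+o(1))\, c(\cA,\theta)\, \log(n)$. The slack requires controlling $\sum_x \hat T_x$ via \cref{lem:opt}, whose $\hat\Delta_{\min}^{-3}$ factor can blow up when some perceived gap is tiny on $F_n$. I expect a case split at the threshold $\tau_n$ to resolve this: arms with $\hat\Delta_x \ge \tau_n$ satisfy $\Delta_x \le 2\hat\Delta_x$ and \cref{lem:opt} contributes $O(\log n)$, whereas arms with $\hat\Delta_x < \tau_n$ are near-optimal in truth ($\Delta_x \le 3\tau_n$) and their pull counts are controlled by a feasibility bound for the optimization restricted to such arms (they only need to cover directions already well-covered by the other pulls), keeping their combined contribution to $\tau_n \sum_x \hat T_x$ within $O(\log n)$. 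Summing both cases yields the required $O(\log n)$ deterministic bound.
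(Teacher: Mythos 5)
Your overall reduction is the same as the paper's: establish a deterministic bound $m\log(n)$ on the success-phase regret whenever $F_n'$ fails, with $m$ depending only on $\cA$ and $\theta$, then multiply by $\P{F_n}\le 1/\log(n)$. That part is fine. The gap is in how you propose to obtain the deterministic bound, and it is a real one: your argument never invokes the exit test of the while loop ($\shortnorm{\hat\mu-\hat\mu(t-1)}_\infty\le 2\epsilon_n$), and without it the claimed bound is false. On the event $F_n\cap(\text{not }F_n')$ the warm-up estimates are only accurate to within $\sqrt{\norm{x}^2_{G_{t_0}^{-1}}f_n}=\Theta(\log^{1/4}(n))$ --- your own $\tau_n$ --- which \emph{diverges}, so $\hat\Delta$ can be arbitrarily wrong. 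In particular $\hat\Delta_{x^*}>0$ is possible, in which case some truly suboptimal arm $x$ has $\hat\Delta_x=0$, receives $\hat T_x=\infty$ in \cref{def:opt}, and would be played for the rest of the horizon (linear regret) unless the anomaly test fires. Likewise an arm with $\hat\Delta_x$ tiny but $\Delta_x$ bounded away from zero can be allocated $\hat T_x\approx f_n\norm{x}^2_{H^\dagger}/\hat\Delta_x^2\gg n$ pulls. Your threshold case split does not repair this: since $\tau_n\to\infty$, ``arms with $\hat\Delta_x<\tau_n$ are near-optimal in truth'' is vacuous, and the claim that their pull counts are controlled because ``they only need to cover directions already well-covered'' is not a property of the optimisation problem --- the constraint $\norm{x}^2_{H_T^\dagger}\le\hat\Delta_x^2/f_n$ must hold for every arm, including those with tiny positive $\hat\Delta_x$, and it is exactly those constraints that force enormous allocations. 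For the same reason the perturbation step giving $\sum_x\hat T_x\hat\Delta_x\le(1+o(1))c(\cA,\theta)\log(n)$ requires $\max_x\Delta_x^2/\hat\Delta_x^2=1+o(1)$, which is available only on the complement of $F_n$, not on $F_n\cap(\text{not }F_n')$.

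The paper's proof supplies the missing mechanism by splitting on how wrong $\hat\Delta$ is. When $\hat\Delta_{x^*}=0$ and $\hat\Delta_{\min}\ge\Delta_{\min}/4$, \cref{lem:opt} alone bounds the total pulls by $O(f_n)$ with constants depending only on the true gaps. In the two bad cases ($\hat\Delta_{x^*}>0$, or some suboptimal arm with $\hat\Delta_x\le\Delta_{\min}/4$), the misjudged arm is pulled so heavily that, by \cref{lem:t-bound}, within $O(f_n)$ rounds the confidence widths shrink enough that ``not $F_n'$'' forces $|\hat\mu_x(t)-\hat\mu_x(t_0)|>2\epsilon_n$; the success phase then terminates and the algorithm escapes to UCB, capping the success-phase regret at $\Delta_{\max}\cdot O(f_n)$. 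You need this escape argument (or an equivalent one) for your deterministic bound to go through.
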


The proof follows by showing the existence of a constant $m$ that depends on $\cA$ and $\theta$, but not $n$ such that the regret suffered in
the success phase whenever $F_n'$ does not hold is almost surely at most $m \log(n)$. The result follows from this because $\P{F_n} \leq 1/\log(n)$. 
See \ifsup Appendix \ref{app:lem:Fp} for details. \else the supplementary material for details. \fi

\begin{proof}[Proof of Theorem \ref{thm:upper}]
We decompose the regret into the regret suffered in each of the phases:
\eqn{
R_\theta^\pi(n)  &=
\E\left[\sum_{\smash{t \in \Twarmup}} \Delta_{A_t}
 +\sum_{\smash{t \in \Tsuccess}}\Delta_{A_t}
 + \sum_{t \in \Trecovery} \Delta_{A_t}\right]\,.
\label{eq:decomp}
}
The warm-up phase has length $d \lceil\log^{1/2}(n) \rceil$, which contributes asymptotically negligibly to the regret:
\eqn{
\limsup_{n\to\infty} \frac{\E\left[\sum_{t \in \Twarmup} \Delta_{A_t}\right]}{\log(n)} = 0\,.
\label{eq:Twarmup}
}
By Lemma \ref{lem:F}, the recovery phase only occurs if $F_n$ occurs and $\P{F_n} \leq 1/\log(n)$. Therefore by well-known guarantees for UCB \citep{BC12}
there exists a universal constant $c > 0$ such that \todoc{Mildly fishy: need to argue that conditioning does not ruin things..}
\eq{
\E\left[\sum_{t \in \Trecovery} \Delta_{A_t}\right]
&= \E\left[\sum_{t \in \Trecovery} \Delta_{A_t}\Bigg| \Trecovery \neq \emptyset \right]
 \P{\Trecovery \neq \emptyset}  \\
&\leq \frac{ck \log(n)}{\Delta_{\min}} \P{\Trecovery \neq \emptyset} 
\leq \frac{ck}{\Delta_{\min}}\,.
}
Therefore
\eqn{
\label{eq:recovery}
\limsup_{n\to\infty} \frac{\E\left[\sum_{t \in \Trecovery} \Delta_t\right]}{\log(n)} = 0\,.
}
Finally we use the previous lemmas to analyse the regret in the success phase:
\eqn{
\E\left[\sum_{t \in \Tsuccess} \Delta_{A_t}\right] \nonumber
&= \E\left[\ind{\text{not } F_n} \sum_{t \in \Tsuccess} \Delta_{A_t}\right]  \\ \nonumber
&+ \E\left[\ind{F_n \text{ and not } F'_n} \sum_{t \in \Tsuccess} \Delta_{A_t}\right] \\ \label{eq:upper-last}
&+ \E\left[\ind{F'_n} \sum_{t \in \Tsuccess} \Delta_{A_t}\right] \,.
}
By (\ref{eq:conc-cor}), the last term satisfies
\eq{
&\limsup_{n\to\infty} \frac{\E\left[\ind{F'_n} \sum_{t \in \Tsuccess} \Delta_{A_t}\right]}{\log(n)} \\ 
&\qquad\leq \limsup_{n\to\infty} \frac{n \Delta_{\max} \P{F'_n}}{\log(n)}  = 0\,.
}
The first two terms in (\ref{eq:upper-last}) are bounded using Lemmas \ref{lem:F} and \ref{lem:Fp}, leading to
\eq{
\limsup_{n\to\infty} \frac{\E\left[\sum_{t \in \Tsuccess} \Delta_{A_t}\right]}{\log(n)} \leq c(\mathcal A, \theta)\,. 
}
Substituting the above display together with (\ref{eq:Twarmup}) and (\ref{eq:recovery}) into (\ref{eq:decomp}) completes the result.
\end{proof}

\section{SUB-OPTIMALITY OF OPTIMISM AND THOMPSON SAMPLING}\label{sec:optimism}

We now argue that algorithms based on optimism or Thompson sampling cannot be close to asymptotically optimal.
In each round $t$ an optimistic algorithm constructs a confidence set $\cC_t \subseteq \R^d$ and chooses $A_t$ according to
$A_t = \argmax_{x \in \mathcal A} \max_{\tilde \theta \in \cC_t} \shortinner{x, \tilde \theta}$.
In order to proceed we need to make some assumptions on $\cC_t$, otherwise one can define a ``confidence set'' to ensure any behaviour at all. 
First of all, we will assume that $\P{\exists t \leq n : \theta \notin \cC_t} = O(1/n)$. That is, that the probability that the true parameter is ever
outside the confidence set is not too large. Second, we assume that $\cC_t \subseteq \cE_t$ where $\cE_t$ is the ellipsoid about the least squares estimator given by
\eq{
\cE_t = \set{\tilde \theta : \shortnorm{\hat \theta(t) - \tilde \theta}_{G_t}^2 \leq \alpha \log(n)}\,,
}
where $\alpha$ is some constant and $\hat \theta(t)$ is the empirical estimate of $\theta$ based on the observations so far.
Existing algorithms based on confidence all use such confidence sets. Standard wisdom when designing optimistic algorithms is to use the smallest confidence set possible, so
an alternative algorithm that used a different form of confidence set would normally be advised to use the intersection $\cC_t \cap \cE_t$, which remains valid with high
probability by a union bound.
If the optimistic algorithm is not consistent, then its regret is not logarithmic on some problem and so diverges relative to the optimal strategy. Suppose now that the algorithm
is consistent. Then we design a bandit on which its asymptotic regret is worse than optimal by an arbitrarily large constant factor.

Let $d = 2$ and $e_1 = (1,0)$ and $e_2 = (0,1)$ be the standard basis vectors. 
The counter-example (illustrated in Figure \ref{fig:example}) is very simple with $\mathcal A = \set{e_1, e_2, x}$ where $x = (1 - \epsilon, 8\alpha \epsilon)$. 
The true parameter is given by $\theta = e_1$, which means that $x^* = e_1$ and $\Delta_{e_2} = 1$ and $\Delta_x = \epsilon$.
Suppose a consistent optimistic algorithm has chosen $T_{e_2}(t-1) \geq 4\alpha \log(n)$ and that $\theta \in C_t$. Then,
\eq{
\max_{\tilde \theta \in C_t} \shortinner{e_2, \tilde \theta} 
&\leq \shortinner{e_2, \hat \theta(t-1)} +  \sqrt{\norm{e_2}_{G_t^{-1}}^2 \alpha \log(n)} \\
&< 2 \sqrt{\norm{e_2}_{G_t^{-1}}^2 \alpha \log(n)} \leq 1\,.
}
But because $\theta \in C_t$, the optimistic value of the optimal action is at least $\shortinner{e_1, \theta} = 1$,
which means that $A_t \neq e_2$.
We conclude that if $\theta \in C_t$ for all rounds, then the optimistic algorithm satisfies $T_{e_2}(t-1) \leq 1 + 4\alpha \log(n)$.
By the assumption that $\theta \in C_t$ with probability at least $1 - 1/n$ we bound 
$\E[T_{e_2}(n)] \leq 2 + 4\alpha \log(n)$.
By consistency of the optimistic algorithm and our lower bound (Theorem \ref{thm:lower}) we have
\eq{
\limsup_{n\to\infty} \log(n) \norm{x - e_1}^2_{\bar G_n^{-1}} \leq \frac{\epsilon^2}{2}\,,
}
Therefore by choosing $\epsilon$ sufficiently small we conclude that
$\limsup_{n\to\infty} \E[T_x(n)] / \log(n) = \Omega(1/\epsilon^2)$
and so the asymptotic regret of the optimistic algorithm is at least
\eq{
\limsup_{n\to\infty} \frac{R_\theta^{\text{\scalebox{0.8}{\scshape optimistic}}}(n)}{\log(n)} = \Omega\left(\frac{1}{\epsilon}\right)\,.
}
However, for small $\epsilon$ the optimal regret for this problem is $c(\mathcal A, \theta) = 128 \alpha^2$ and so by choosing $\epsilon \ll \alpha$ we
can see that the optimistic approach is sub-optimal by an arbitrarily large constant factor.
The intuition is that the optimistic algorithms very quickly learn that $e_2$ is a sub-optimal arm and stop playing it. But as it turns out,
the information gained by choosing $e_2$ is sufficiently valuable that an optimal algorithm should use it for exploration.

\begin{wrapfigure}[8]{r}{3.2cm}
\vspace{-1.2cm}
\hspace{-0.7cm}
\begin{tikzpicture}[scale=0.4]
\draw[->] (0,0) -- (6,0);
\draw[->] (0,0) -- (0,6);
\draw[->] (0,0) -- (5.4, 1.2);
\draw[densely dotted] (6,0) -- (5.4, 1.2);
\draw[densely dotted] (5.4,1.2) -- (0,6);
\draw[densely dotted] (5.4,1.2) -- (5.4,0);
\draw[densely dotted,->] (5.7,0.6) -- (7.7,1.6);
\node at (5.85, -0.4) {\scriptsize $\epsilon$};
\node[xshift=0.5cm,yshift=0.4cm] at (5.4,1.2) {\scriptsize $(1-\epsilon, 2\epsilon)$};
\node[xshift=0.5cm] at (6, 0) {\scriptsize $(1, 0)$};
\node[yshift=0.3cm] at (0.3, 6) {\scriptsize $(0, 1)$};
\end{tikzpicture}
\caption{Counter-example}\label{fig:example}
\end{wrapfigure}
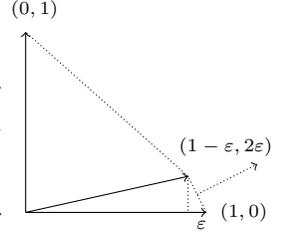
Thompson sampling has also been proposed for the linear bandit problem \citep{AG13}.
The standard approach uses a nearly flat Gaussian prior (and so posterior), which means that essentially the algorithm operates by sampling
$\theta_t$ from $\mathcal N(\hat \mu(t), \alpha G_t^{-1})$ and choosing the arm $A_t = \argmax_{x \in \mathcal A} \shortinner{x, \theta_t}$. 
Why does this approach fail? By the assumption of consistency we expect that the optimal arm will be played all but logarithmically often, which means that
the posterior will concentrate quickly about the value of the optimal action so that $\shortinner{x^*, \theta_t} \approx \mu^*$. Then using 
the same counter-example as for the optimistic algorithm we see that the likelihood that $\shortinner{e_2 - e_1, \theta_t} \geq 0$ is vanishingly small
once $T_{e_2}(t-1) = \Omega(\alpha \log(n))$ and so Thompson sampling will also fail to sample action $e_2$ sufficiently often.

\section{SUMMARY}\label{sec:summary}

We characterised the optimal asymptotic regret for linear bandits with Gaussian noise and finitely many actions in the sense of \cite{LR85}.
The results highlight a surprising fact that all reasonable algorithms based on optimism can be arbitrarily worse than optimal.
While this behaviour has been observed before in more complicated settings (notably, partial monitoring), our results are the first to illustrate this issue
in a setting only barely more complicated than finite-armed bandits.
Besides this we improve the self-normalised concentration guarantees by \cite{AST11} by a factor of $d$ asymptotically.

As usual, we open more questions than we answer.
While the proposed strategy is asymptotically optimal, it is also extraordinarily naive and the analysis is far from 
showing finite-time optimality.
For this reason we think the most pressing task is to develop efficient and practical algorithms that exploit the available information 
in a way that Thompson sampling and optimism do not. There are two natural research directions towards this goal. The first is to push the optimisation approach used 
here and also by \cite{Wu15}, but applied more ``smoothly'' without discarding data or long phases. The second is to generalise
information-theoretic ideas used (for instance) by \cite{RV14} or \cite{RCV16}.

\appendix

\bibliographystyle{plainnat}
\bibliography{all}

\begin{thebibliography}{29}
\providecommand{\natexlab}[1]{#1}
\providecommand{\url}[1]{\texttt{#1}}
\expandafter\ifx\csname urlstyle\endcsname\relax
  \providecommand{\doi}[1]{doi: #1}\else
  \providecommand{\doi}{doi: \begingroup \urlstyle{rm}\Url}\fi

\bibitem[Abbasi-Yadkori et~al.(2011)Abbasi-Yadkori, P{\'a}l, and
  Szepesv{\'a}ri]{AST11}
Yasin Abbasi-Yadkori, D{\'a}vid P{\'a}l, and Csaba Szepesv{\'a}ri.
\newblock Improved algorithms for linear stochastic bandits.
\newblock In \emph{Advances in Neural Information Processing Systems (NIPS)},
  pages 2312--2320, 2011.

\bibitem[Abbasi-Yadkori et~al.(2012)Abbasi-Yadkori, P{\'a}l, and
  Szepesv{\'a}ri]{APS12}
Yasin Abbasi-Yadkori, D{\'a}vid P{\'a}l, and Csaba Szepesv{\'a}ri.
\newblock Online-to-confidence-set conversions and application to sparse
  stochastic bandits.
\newblock In \emph{AISTATS}, pages 1--9, 2012.

\bibitem[Agrawal(1995)]{Agr95}
Rajeev Agrawal.
\newblock Sample mean based index policies with ${O}(\log n)$ regret for the
  multi-armed bandit problem.
\newblock \emph{Advances in Applied Probability}, pages 1054--1078, 1995.

\bibitem[Agrawal et~al.(1989)Agrawal, Teneketzis, and
  Anantharam]{AgTeAn89:pmon}
Rajeev Agrawal, Demosthenis Teneketzis, and Venkatachalam Anantharam.
\newblock Asymptotically efficient adaptive allocation schemes for controlled
  i.i.d. processes: Finite parameter space.
\newblock \emph{IEEE Transaction on Automatic Control}, 34:\penalty0 258--267,
  1989.

\bibitem[Agrawal and Goyal(2013)]{AG13}
Shipra Agrawal and Navin Goyal.
\newblock Thompson sampling for contextual bandits with linear payoffs.
\newblock In \emph{ICML}, pages 127--135, 2013.

\bibitem[Audibert and Bubeck(2009)]{AB09}
Jean-Yves Audibert and S{\'e}bastien Bubeck.
\newblock Minimax policies for adversarial and stochastic bandits.
\newblock In \emph{Proceedings of Conference on Learning Theory (COLT)}, pages
  217--226, 2009.

\bibitem[Auer(2002)]{Aue02}
Peter Auer.
\newblock Using confidence bounds for exploitation-exploration trade-offs.
\newblock \emph{Journal of Machine Learning Research}, 3\penalty0
  (Nov):\penalty0 397--422, 2002.

\bibitem[Auer et~al.(1995)Auer, Cesa-Bianchi, Freund, and Schapire]{ACFS95}
Peter Auer, Nicol{\'o} Cesa-Bianchi, Yoav Freund, and Robert~E. Schapire.
\newblock Gambling in a rigged casino: The adversarial multi-armed bandit
  problem.
\newblock In \emph{Proceedings of the 36th Annual Symposium on Foundations of
  Computer Science}, pages 322--331, 1995.

\bibitem[Auer et~al.(2002)Auer, Cesa-Bianchi, and Fischer]{ACF02}
Peter Auer, Nicol{\'o} Cesa-Bianchi, and Paul Fischer.
\newblock Finite-time analysis of the multiarmed bandit problem.
\newblock \emph{Machine Learning}, 47:\penalty0 235--256, 2002.

\bibitem[Auer et~al.(2010)Auer, Jaksch, and Ortner]{AJO10}
Peter Auer, Thomas Jaksch, and Ronald Ortner.
\newblock Near-optimal regret bounds for reinforcement learning.
\newblock \emph{Journal of Machine Learning Research}, 99:\penalty0 1563--1600,
  August 2010.
\newblock ISSN 1532-4435.

\bibitem[Awerbuch and Kleinberg(2004)]{AK04}
Baruch Awerbuch and Robert~D Kleinberg.
\newblock Adaptive routing with end-to-end feedback: Distributed learning and
  geometric approaches.
\newblock In \emph{Proceedings of the 36th Annual ACM Symposium on the Theory
  of Computing}, pages 45--53, 2004.

\bibitem[Bart{\'o}k et~al.(2014)Bart{\'o}k, Foster, P{\'a}l, Rakhlin, and
  Szepesv{\'a}ri]{BFPRS14}
G{\'a}bor Bart{\'o}k, Dean~P Foster, D{\'a}vid P{\'a}l, Alexander Rakhlin, and
  Csaba Szepesv{\'a}ri.
\newblock Partial monitoring-classification, regret bounds, and algorithms.
\newblock \emph{Mathematics of Operations Research}, 39\penalty0 (4):\penalty0
  967--997, 2014.

\bibitem[Bubeck and Cesa-Bianchi(2012)]{BC12}
S\'ebastien Bubeck and Nicol\`o Cesa-Bianchi.
\newblock \emph{Regret Analysis of Stochastic and Nonstochastic Multi-armed
  Bandit Problems}.
\newblock Foundations and Trends in Machine Learning. Now Publishers
  Incorporated, 2012.
\newblock ISBN 9781601986269.

\bibitem[Chapelle and Li(2011)]{CL11}
Olivier Chapelle and Lihong Li.
\newblock An empirical evaluation of {T}hompson sampling.
\newblock In \emph{Advances in Neural Information Processing Systems (NIPS)},
  pages 2249--2257, 2011.

\bibitem[Dani et~al.(2008)Dani, Hayes, and Kakade]{DHK08}
Varsha Dani, Thomas~P Hayes, and Sham~M Kakade.
\newblock Stochastic linear optimization under bandit feedback.
\newblock In \emph{Proceedings of Conference on Learning Theory (COLT)}, pages
  355--366, 2008.

\bibitem[Filippi et~al.(2010)Filippi, Capp{\'e}, Garivier, and
  Szepesv{\'a}ri]{FCGS10}
Sarah Filippi, Olivier Capp{\'e}, Aur\'elien Garivier, and {\relax Cs}aba
  Szepesv{\'a}ri.
\newblock Parametric bandits: The generalized linear case.
\newblock In \emph{NIPS}, pages 586--594, December 2010.

\bibitem[Gerchinovitz and Lattimore(2016)]{GL16}
S{\'e}bastien Gerchinovitz and Tor Lattimore.
\newblock Refined lower bounds for adversarial bandits.
\newblock \emph{arXiv preprint arXiv:1605.07416}, 2016.

\bibitem[Gopalan and Mannor(2015)]{GM15}
Aditya Gopalan and Shie Mannor.
\newblock Thompson sampling for learning parameterized {M}arkov decision
  processes.
\newblock In \emph{Proceedings of the 28th Conference on Learning Theory
  (COLT)}, pages 861--898, 2015.

\bibitem[Katehakis and Robbins(1995)]{KR95}
Michael~N Katehakis and Herbert Robbins.
\newblock Sequential choice from several populations.
\newblock \emph{Proceedings of the National Academy of Sciences of the United
  States of America}, 92\penalty0 (19):\penalty0 8584, 1995.

\bibitem[Kaufmann et~al.(2012)Kaufmann, Korda, and Munos]{KKM12}
Emilie Kaufmann, Nathaniel Korda, and R{\'e}mi Munos.
\newblock Thompson sampling: An asymptotically optimal finite-time analysis.
\newblock In Nader~H. Bshouty, Gilles Stoltz, Nicolas Vayatis, and Thomas
  Zeugmann, editors, \emph{Algorithmic Learning Theory}, pages 199--213, 2012.

\bibitem[Komiyama et~al.(2015)Komiyama, Honda, and Nakagawa]{KHN15}
Junpei Komiyama, Junya Honda, and Hiroshi Nakagawa.
\newblock Regret lower bound and optimal algorithm in finite stochastic partial
  monitoring.
\newblock In \emph{Advances in Neural Information Processing Systems (NIPS)},
  pages 1792--1800, 2015.

\bibitem[Korda et~al.(2013)Korda, Kaufmann, and Munos]{KKM13}
Nathaniel Korda, Emilie Kaufmann, and R{\'e}mi Munos.
\newblock Thompson sampling for 1-dimensional exponential family bandits.
\newblock In \emph{Advances in Neural Information Processing Systems (NIPS)},
  pages 1448--1456, 2013.

\bibitem[Lai and Robbins(1985)]{LR85}
Tze~Leung Lai and Herbert Robbins.
\newblock Asymptotically efficient adaptive allocation rules.
\newblock \emph{Advances in Applied Mathematics}, 6:\penalty0 4--22, 1985.

\bibitem[Reddy et~al.(2016)Reddy, Celani, and Vergassola]{RCV16}
Gautam Reddy, Antonio Celani, and Massimo Vergassola.
\newblock Infomax strategies for an optimal balance between exploration and
  exploitation.
\newblock \emph{Journal of Statistical Physics}, 163\penalty0 (6):\penalty0
  1454--1476, 2016.

\bibitem[Russo and Van~Roy(2014)]{RV14}
Daniel Russo and Benjamin Van~Roy.
\newblock Learning to optimize via posterior sampling.
\newblock \emph{Mathematics of Operations Research}, 39\penalty0 (4):\penalty0
  1221--1243, 2014.

\bibitem[Thompson(1933)]{Tho33}
William Thompson.
\newblock On the likelihood that one unknown probability exceeds another in
  view of the evidence of two samples.
\newblock \emph{Biometrika}, 25\penalty0 (3/4):\penalty0 285--294, 1933.

\bibitem[Tsybakov(2008)]{Tsy08}
Alexandre~B Tsybakov.
\newblock \emph{Introduction to nonparametric estimation}.
\newblock Springer Science \& Business Media, 2008.

\bibitem[Valko et~al.(2014)Valko, Munos, Kveton, and Kocak]{VMKK14}
Michal Valko, R{\'e}mi Munos, Branislav Kveton, and Tomas Kocak.
\newblock Spectral bandits for smooth graph functions.
\newblock In \emph{ICML}, pages 46--54, 2014.

\bibitem[Wu et~al.(2015)Wu, Gy{\"{o}}rgy, and Szepesv{\'{a}}ri]{Wu15}
Yifan Wu, Andr{\'{a}}s Gy{\"{o}}rgy, and Csaba Szepesv{\'{a}}ri.
\newblock {Online Learning with Gaussian Payoffs and Side Observations}.
\newblock In \emph{Advances in Neural Information Processing Systems (NIPS)},
  pages 1360--1368, 2015.

\end{thebibliography}

\ifsup
\section{PROOF OF THEOREM \ref{thm:conc}}\label{app:conc}

Recall that $A_t$ is the action chosen in round $t$ and that $\eta_t = Y_t - \inner{A_t, \theta}$ is the noise term,
which we assumed to be a standard Gaussian. Let $S_t = \sum_{s=1}^t A_s \eta_s$.
By assumption, $\norm{A_t}\le 1$ for all $t\ge 1$.

\begin{lemma}
Let $n \in \N$ and $\epsilon > 0$ and $\sigma^2 > 0$.
Let $X_1,X_2,\ldots,X_n$ be a sequence of Gaussian random variables adapted to filtration $\cF_1,\cF_2,\ldots$ such that
$\E[X_t|\cF_{t-1}] = 0$. Define $\sigma^2_t = \Var[X_t|\cF_{t-1}]$ and assume that $\sigma^2_t \leq \sigma^2$ almost surely. Then 
\eq{
\P{\exists t \leq n : \sum_{s=1}^t X_s \geq \sqrt{2 \gamma_n V_t \log\left(\frac{N}{\delta}\right)}} \leq \delta\,,
}
where $V_t = \max\set{\epsilon, \sum_{s=1}^t \sigma^2_t}$ and 
\eq{
\gamma_n = 1 + \frac{1}{\log(n)} \quad \text{ and } \quad N = 1 + \ceil{\frac{\log(n\sigma^2/\epsilon)}{\log(\gamma_n)}}\,.
}
\end{lemma}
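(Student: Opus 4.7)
The plan is to control the deviations of $S_t = \sum_{s=1}^t X_s$ via the standard exponential martingale, combined with a geometric peeling over the range of the stochastic variance process $W_t = \sum_{s=1}^t \sigma_s^2$. Since $X_t$ is conditionally Gaussian with mean zero and conditional variance $\sigma_t^2$, for any deterministic $\lambda\in\R$ the process
\eq{
M_t(\lambda) = \exp\!\left(\lambda S_t - \tfrac12 \lambda^2 W_t\right)
}
is an $(\cF_t)$-martingale with $M_0(\lambda)=1$. By Ville's (or Doob's) maximal inequality, $\P{\exists t\le n: M_t(\lambda) \ge u} \le 1/u$ for every $u>0$.

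The issue is that the ``optimal'' choice of $\lambda$ depends on the (random) realised variance $W_t$, so a single $\lambda$ will not suffice. To handle this, I would peel over the values of $V_t = \max(\epsilon, W_t)$. Note that $V_t$ is non-decreasing in $t$ and takes values in $[\epsilon, \max(\epsilon, n\sigma^2)]$ since $\sigma_s^2 \le \sigma^2$. Cover this range by the geometric grid $\epsilon \gamma_n^k$ for $k=0,1,\dots,N-1$, where $N$ is chosen so that $\epsilon \gamma_n^{N-1} \ge \max(\epsilon, n\sigma^2)$; the definition of $N$ in the statement is exactly this. For each $k$, set
\eq{
\lambda_k = \sqrt{\frac{2 \log(N/\delta)}{\epsilon \gamma_n^{k+1}}}\,.
}
On the event $\{V_t \in [\epsilon \gamma_n^k, \epsilon \gamma_n^{k+1}]\}$ one has $W_t \le V_t \le \epsilon \gamma_n^{k+1}$, so $\tfrac12 \lambda_k^2 W_t \le \log(N/\delta)$. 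If in addition $S_t \ge \sqrt{2 \gamma_n V_t \log(N/\delta)}$, then a direct computation using $V_t \ge \epsilon\gamma_n^k$ shows $\lambda_k S_t \ge 2\log(N/\delta)$, so $M_t(\lambda_k) \ge N/\delta$.

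Applying Ville's inequality to $M_t(\lambda_k)$ with threshold $N/\delta$ then yields probability at most $\delta/N$ for the bad event within the $k$-th shell, and a union bound over the $N$ shells gives the claimed probability $\delta$. One small bookkeeping point deserves care: when $W_t < \epsilon$ we have $V_t = \epsilon$ but $W_t \ne V_t$, so the inequality $\tfrac12\lambda_k^2 W_t \le \log(N/\delta)$ should be read as $W_t \le V_t$, which still holds because $V_t = \max(\epsilon, W_t) \ge W_t$. The main technical obstacle, and the reason the slack factor $\gamma_n = 1 + 1/\log(n)$ appears, is tuning the geometric ratio so that the peeling overhead $N$ grows only like $\log(n\sigma^2/\epsilon) \cdot \log(n)$ (hence logarithmically) while the within-shell approximation of $V_t$ by $\epsilon\gamma_n^{k+1}$ loses only the factor $\gamma_n$ in the exponent; this is what produces the asymptotically sharp constant $2\gamma_n$ rather than a larger one.
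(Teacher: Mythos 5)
Your proof is correct and follows essentially the same route as the paper: the exponential martingale $\exp(\lambda S_t - \tfrac12\lambda^2 W_t)$, a maximal inequality (your Ville's inequality is just the packaged form of the paper's stopping-time-plus-Markov argument), and geometric peeling of the variance over a grid of ratio $\gamma_n$ with a union bound over the $N$ grid points. The arithmetic checks out: on the shell $V_t\in[\epsilon\gamma_n^k,\epsilon\gamma_n^{k+1}]$ your choice of $\lambda_k$ gives $\lambda_k S_t\ge 2\log(N/\delta)$ and $\tfrac12\lambda_k^2 W_t\le\log(N/\delta)$, exactly reproducing the paper's threshold $\sqrt{2\gamma_n V_t\log(N/\delta)}$.
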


\begin{proof}
For $\psi \in \R$ define
\eq{
M_{t,\psi} = \exp\left(\sum_{s=1}^t \psi X_t - \frac{\psi^2\sigma_t^2}{2} \right)\,.
}
If $\tau \leq n$ is a stopping time with respect to $\cF$, then as in the proof \citep[Lemma 8]{AST11} we have 
$\E[M_{\tau,\psi}] \leq 1$.
Therefore, by Markov's inequality we have 
\eqn{
\label{eq:mix-mg}
\P{M_{\tau,\psi} \geq 1/\delta} \leq \delta\,.
}
For $k \in \set{1,2,\ldots,N}$ define 
\eq{
\psi_k = \sqrt{\frac{2}{\epsilon \gamma_n^{k-1}} \log\left(\frac{N}{\delta}\right)}
}
Then rearranging (\ref{eq:mix-mg}) leads to
\eq{
\P{\exists k \in [N] : \sum_{t=1}^\tau X_t \geq \frac{1}{\psi_k} \log\left(\frac{N}{\delta}\right) + \frac{\psi_k V_\tau}{2}} \leq \delta\,.
}
Therefore letting 
\eq{
k^* = \min\set{k \in [N] : \psi_k \geq \sqrt{2 \log(N/\delta) / V_\tau}}
}
leads to
\eq{
\delta 
&\geq \P{\sum_{t=1}^\tau X_t \geq \frac{1}{\psi_{k^*}} \log\left(\frac{N}{\delta}\right) + \frac{\psi_{k^*} V_\tau}{2}} \\
&\geq \P{\sum_{t=1}^\tau X_t \geq \sqrt{2 \gamma_n V_\tau \log\left(\frac{N}{\delta}\right)}}\,.
}
The result is completed by choosing stopping time $\tau$ by $\tau = \min(n,\tau_n)$, where
\eq{
\tau_n = \min\set{t \leq n : \sum_{s=1}^t X_s \geq \sqrt{2\gamma_n V_t \log\left(\frac{N}{\delta}\right)}}\,.
}
\end{proof}

\begin{lemma}\label{lem:conc-simple}
Let $\delta \in [1/n,1)$ and $\lambda \in \R^d$ with $\norm{\lambda} \leq 1$. 
Then
\eq{
\P{\exists t \leq n : \inner{\lambda, S_t} \geq \sqrt{\frac{1}{n^2} \vee \norm{\lambda}_{G_t}^2 h_{n,\delta}}} \leq \delta\,,
}
where
\eq{
h_{n,\delta} 
= 2\left(1 + \frac{1}{\log(n)}\right) \log\left(\frac{c \log(n)}{\delta}\right)\,
}
with some  universal constant $c \geq 1$.
\end{lemma}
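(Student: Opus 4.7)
\bigskip
\noindent\textbf{Proof proposal for Lemma \ref{lem:conc-simple}.}
The plan is to reduce Lemma \ref{lem:conc-simple} to the preceding scalar peeling lemma by projecting the vector martingale $S_t$ onto the direction $\lambda$. Concretely, define $X_s = \shortinner{\lambda, A_s}\, \eta_s$ and $\cF_s = \sigma(A_1, Y_1, \ldots, A_s, Y_s)$. Since the action $A_s$ is chosen based on $\cF_{s-1}$, the factor $\shortinner{\lambda, A_s}$ is $\cF_{s-1}$-measurable, while $\eta_s \sim \mathcal N(0,1)$ is independent of $\cF_{s-1}$. Hence conditionally on $\cF_{s-1}$, $X_s$ is Gaussian with mean zero and variance $\sigma_s^2 = \shortinner{\lambda, A_s}^2$, so the conditional zero-mean Gaussianity hypothesis of the previous lemma holds. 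Because $\norm{\lambda} \leq 1$ and $\norm{A_s}\leq 1$, we have $\sigma_s^2 \leq 1$, so we may take $\sigma^2 = 1$ in that lemma.

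Next, I would observe the two key identifications: $\sum_{s=1}^t X_s = \shortinner{\lambda, S_t}$ and $\sum_{s=1}^t \sigma_s^2 = \sum_{s=1}^t \shortinner{\lambda, A_s}^2 = \norm{\lambda}_{G_t}^2$. Choosing $\epsilon = 1/n^2$ in the preceding lemma then yields
\[
V_t = \max\!\left\{\tfrac{1}{n^2},\, \norm{\lambda}_{G_t}^2\right\} = \tfrac{1}{n^2} \vee \norm{\lambda}_{G_t}^2,
\]
which is exactly the quantity appearing in the statement of Lemma \ref{lem:conc-simple}. With these choices, the preceding lemma immediately produces a uniform-in-$t$ bound of the form $\sqrt{2\gamma_n V_t \log(N/\delta)}$ with $\gamma_n = 1 + 1/\log(n)$.

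The remaining work is to control $N = 1 + \lceil \log(n\sigma^2/\epsilon)/\log(\gamma_n)\rceil$. Plugging in $\sigma^2 = 1$ and $\epsilon = 1/n^2$ gives $\log(n\sigma^2/\epsilon) = 3\log(n)$, while the elementary bound $\log(1+x) \geq x/(1+x)$ gives $\log(\gamma_n) \geq 1/(\log(n)+1)$. Hence $N \leq 1 + \lceil 3\log(n)(\log(n)+1)\rceil$, and taking logarithms one sees that $\log(N)$ is bounded by $\log\log(n)$ plus a universal constant, uniformly in $n$ large enough. This means that for a suitable universal constant $c \geq 1$, one has $\log(N/\delta) \leq \log(c\log(n)/\delta)$ in the sense needed to absorb lower-order terms into the constant $c$. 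Combining this with the output of the preceding lemma yields the claimed bound with $h_{n,\delta} = 2\gamma_n \log(c\log(n)/\delta)$.

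The main technical care is in the bookkeeping that turns the raw conclusion of the scalar peeling lemma into exactly the stated form: verifying that the truncation level $\epsilon = 1/n^2$ correctly matches the $1/n^2$ floor in the statement, and that the logarithm of the grid size $N$ can indeed be folded into the $\log(c\log(n))$ term uniformly in $\delta \in [1/n,1)$. Beyond that, the conceptual content is entirely the projection step $\lambda^\top S_t$, after which the vector-valued problem is reduced to a scalar time-uniform Gaussian concentration problem that the previous lemma already handles.
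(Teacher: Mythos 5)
Your argument is essentially identical to the paper's: project $S_t$ onto $\lambda$ to obtain the scalar martingale $X_s = \ip{\lambda, A_s}\eta_s$, observe that the predictable variance satisfies $\sigma_s^2 = \ip{\lambda,A_s}^2 \le 1$ and $\sum_{s\le t}\sigma_s^2 = \norm{\lambda}_{G_t}^2$, and invoke the preceding peeling lemma with $\sigma^2 = 1$. The only divergence is the truncation level: the paper takes $\epsilon = 1/(n^2\log^3(n))$ rather than your $1/n^2$, which matters only if the threshold is parsed as $\tfrac{1}{n^2}\vee\bigl(\norm{\lambda}_{G_t}^2\, h_{n,\delta}\bigr)$ --- the reading the paper's choice is calibrated for, since one then needs $\epsilon\cdot 2\gamma_n\log(N/\delta)\le 1/n^2$, whereas your choice only yields the floor $\sqrt{h_{n,\delta}}/n$ rather than $1/n$ in the regime $\norm{\lambda}_{G_t}^2 < 1/n^2$; that regime is never used in the proof of Theorem~\ref{thm:conc}, so nothing breaks. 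Your bookkeeping for $N$ actually gives $\log N = 2\log\log n + O(1)$, not $\log\log n + O(1)$ as you assert (so strictly the constant cannot absorb it and one should write $\log(c\log^2(n)/\delta)$), but the paper's own choice of $\epsilon$ leads to $N = \Theta(\log^2 n)$ as well, so this harmless slack is shared with, and immaterial to, the original proof.
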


\begin{proof}
We prepare to use the previous lemma. First note that
\eq{
\inner{\lambda, S_t} &= \sum_{s=1}^t \eta_s \inner{\lambda, A_t}\,.
}
Since $\eta_s$ is a standard Gaussian, the predictable variance of the term inside the sum is $\sigma^2_t = \inner{\lambda, A_t}^2 \leq \norm{\lambda}^2 \norm{A_t}^2 \leq 1$.
Therefore
\eq{
\sum_{s=1}^t \sigma^2_s 
&= \lambda^\top \sum_{s=1}^t A_s A_s^\top \lambda 
= \norm{\lambda}_{G_t}^2\,.
}
Therefore the result follows by the previous lemma with $X_t = \eta_t \inner{\lambda, A_t}$ and $\epsilon = 1/(n^2 \log(n)^3)$ and $\sigma^2 = 1$.
\end{proof}

The following lemma can be extracted from the proof of Theorem 1 in \cite{AST11}.

\begin{lemma}\label{lem:weak-conc}
Assume that $\{A_s\}$ is such that for some $t_0>0$, $G_{t_0}$ is non-singular almost surely.
Then, for some $c>0$ universal constant,
\eq{
\P{\exists t \geq t_0 : \norm{S_t}^2_{G_t^{-1}} \geq c d \log(n/\delta)} \leq \delta\,.
}
\end{lemma}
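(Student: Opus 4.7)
The plan is to apply the method of mixtures of \cite{AST11}, arranged so that the mixture produces the self-normalizer $G_t^{-1}$ directly rather than $(V+G_t)^{-1}$ for an extrinsic regularizer $V$. The starting point is that for each fixed $\lambda\in\R^d$ the process
\[
M_t^\lambda \;=\; \exp\bigl(\inner{\lambda, S_t} - \tfrac{1}{2}\norm{\lambda}^2_{G_t}\bigr)
\]
is a nonnegative martingale with $\E[M_t^\lambda]=1$, because $\eta_s\mid\cF_{s-1}\sim\cN(0,1)$ yields $\E[\exp(\eta_s\inner{\lambda,A_s})\mid\cF_{s-1}] = \exp(\tfrac{1}{2}\inner{\lambda,A_s}^2)$.

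To obtain $G_t^{-1}$ rather than $(V+G_t)^{-1}$, I would condition on $\cF_{t_0}$, treat $G_{t_0}$ as a deterministic positive definite matrix (using the hypothesis), and mix $M_t^\lambda$ for the shifted process $\{S_t-S_{t_0}\}_{t\ge t_0}$ against the Gaussian prior $\lambda\sim\cN(0,G_{t_0}^{-1})$. A completion-of-the-square computation then yields the conditional mixture martingale
\[
\bar M_t \;=\; \frac{\det(G_{t_0})^{1/2}}{\det(G_t)^{1/2}} \exp\bigl(\tfrac{1}{2}\norm{S_t-S_{t_0}}^2_{G_t^{-1}}\bigr),
\]
with $\bar M_{t_0}=1$. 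Ville's maximal inequality applied to $\{\bar M_t\}_{t\ge t_0}$ gives, with conditional probability at least $1-\delta$,
\[
\norm{S_t-S_{t_0}}^2_{G_t^{-1}} \;\le\; 2\log(1/\delta) + \log\det(G_t) - \log\det(G_{t_0}) \quad\text{for all } t\ge t_0.
\]
Since $\trace(G_t)=\sum_{s\le t}\norm{A_s}^2\le t\le n$, AM--GM gives $\log\det(G_t)\le d\log(n/d)$, so the dominant growth of the right-hand side is $O(d\log(n/\delta))$.

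To recover a bound on the full $\norm{S_t}^2_{G_t^{-1}}$, I would combine the triangle inequality with the Loewner monotonicity $G_t^{-1}\preceq G_{t_0}^{-1}$, yielding $\norm{S_t}^2_{G_t^{-1}}\le 2\norm{S_{t_0}}^2_{G_{t_0}^{-1}} + 2\norm{S_t-S_{t_0}}^2_{G_t^{-1}}$. The residual $\norm{S_{t_0}}^2_{G_{t_0}^{-1}}$ is controlled by rerunning the same method-of-mixtures argument on the process up to $t_0$ with a fixed regularizer $V\succ 0$ and converting the resulting $(V+G_{t_0})^{-1}$ norm to $G_{t_0}^{-1}$ via a Loewner comparison that is valid once the warm-up design has made $G_{t_0}$ dominate $V$. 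All three contributions then combine into $\norm{S_t}^2_{G_t^{-1}}\le c\,d\log(n/\delta)$ for a universal constant $c$.

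The main obstacle I anticipate is the chicken-and-egg role of $G_{t_0}$: the natural mixing prior that yields a $G_t^{-1}$ self-normalizer is itself $\cF_{t_0}$-measurable, so the martingale must be split across $t_0$, and the pre-$t_0$ residual together with the log-determinant penalty $-\log\det(G_{t_0})$ must both be absorbed into the universal constant. This is precisely what the ``$n$ sufficiently large'' clause in the downstream applications enables, since the barycentric-spanner warm-up makes $G_{t_0}$ grow with controlled conditioning.
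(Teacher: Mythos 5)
Your argument is sound, but it takes a different route from the paper, which gives no proof at all: it simply notes that the lemma ``can be extracted from the proof of Theorem 1 in \cite{AST11}.'' The standard extraction keeps a fixed regularizer $V\succ 0$ throughout, obtains $\norm{S_t}^2_{(V+G_t)^{-1}} \le 2\log(\det(V+G_t)^{1/2}\det(V)^{-1/2}/\delta)$ uniformly in $t$, and then converts to the $G_t^{-1}$ norm for $t\ge t_0$ via the single Loewner comparison $G_t^{-1}\preceq (1+\norm{V}/\lambda_{\min}(G_{t_0}))\,(V+G_t)^{-1}$, using that $\lambda_{\min}(G_{t_0})$ is bounded below over the finitely many non-singular realizations of $G_{t_0}$ (finite $\cA$, fixed $t_0$). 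You instead make the prior itself $\cF_{t_0}$-measurable, mixing $\cN(0,G_{t_0}^{-1})$ against the shifted process so that the self-normalizer $G_t^{-1}$ appears exactly; your formula for $\bar M_t$ and the resulting bound $\norm{S_t-S_{t_0}}^2_{G_t^{-1}}\le 2\log(1/\delta)+\log\det(G_t)-\log\det(G_{t_0})$ are correct, the conditional Ville argument is legitimate because the prior is $\cF_{t_0}$-measurable, and the residual $\norm{S_{t_0}}^2_{G_{t_0}^{-1}}$ is handled correctly by a second application plus a union bound. What your version buys is a cleaner self-normalizer with no extrinsic $V$ in the main term; what it costs is the split at $t_0$ and a second deviation event. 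Two small points of care: (i) the terms $-\log\det(G_{t_0})$ and $\norm{S_{t_0}}^2_{G_{t_0}^{-1}}$ are instance-dependent, not universal, so the constant $c$ is universal only after these are dominated by $d\log(n/\delta)$ for $n$ large --- the same caveat implicitly present in the paper's citation-based extraction, and harmless given how the lemma is consumed inside $f_{n,\delta}$; (ii) the justification that these quantities are bounded should rest on the finiteness of $\cA$ and the almost-sure non-singularity hypothesis, not on properties of the barycentric-spanner warm-up, since the lemma is stated for an arbitrary adapted action sequence.
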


\begin{proof}[Proof of Theorem \ref{thm:conc}]
Let $\epsilon > 0$ be some small real number to be tuned subsequently and
choose $\cC \subset \R^d$ to be a finite covering set such that for all $x \in \cA$ and $t$ with $G_t$ non-singular there exists a $\lambda \in \cC$
such that $\lambda = (I + \cE)G_t^{-1} x$, where $\cE$ is some diagonal matrix (possibly depending on $x$ and $G_t^{-1}$) with entries bound in $[0, \epsilon]$.
Of course $G_t$ is a random variable, so we insist the existence of $\lambda$ is almost sure (that is, no matter how the actions are taken).
We defer calculating the necessary size $N = |\cC|$ until later. Let $\delta_1 = \delta / (N+1)$ and $F_\lambda$ be the event that
\eq{
F_\lambda = \set{\exists t : \inner{\lambda, S_t} \geq \sqrt{\frac{1}{n^2} \vee \norm{\lambda}_{G_t}^2 h_{n,\delta_1}}}\,.
}
Then a union bound and Lemma \ref{lem:conc-simple} leads to
\eqn{
\label{eq:F1}
\P{\cup_{\lambda \in \cC} F_\lambda} \leq N \delta_1\,.
}
By Lemma \ref{lem:weak-conc},
for $\mathcal G = \{\exists t \geq t_0 : \norm{S_t}^2_{G_t^{-1}} \geq c d \log(n/\delta_1)\}$,
 we have 
\eqn{
\label{eq:F2}
\P{\mathcal G} \leq \delta_1\,.
}
Another union bound shows that the $\P{\cup_{\lambda \in \cC} F_\lambda \cup \mathcal G} \le (N+1)\delta_1 = \delta$.
From now on we assume that neither $\mathcal F \doteq \cup_{\lambda \in \cC} F_\lambda$, nor $ \mathcal G$ occurs and let $x \in \mathcal A$ be arbitrary and for $t \geq t_0$
let $\lambda \in \cC$ be such that $\lambda =  (I + \cE) G_t^{-1} x$ where $\cE$ is diagonal with entries in $[0,\epsilon]$. Then 
\eqn{
&\hat \mu_x(t) - \mu_x 
=\inner{G_t^{-1} x, S_t} \\
&= \inner{G_t^{-1} x - \lambda, S_t} + \inner{\lambda, S_t} \nonumber \\
&\leq \norm{G_t^{-1} x - \lambda}_{G_t} \norm{S_t}_{G_t^{-1}} + \sqrt{\frac{1}{n^2} \vee \norm{\lambda}_{G_t}^2 h_{n,\delta_1}}\,. \label{eq:conc1}
}
We bound each term separately using matrix algebra and the assumption that the failure events $\mathcal F$ and $\mathcal G$ do not occur:
\eq{
\norm{G_t^{-1} x - \lambda}_{G_t}
&= \norm{\cE G_t^{-1} x}_{G_t} \\
&= \shortnorm{G_t^{1/2} \cE G_t^{-1/2} G_t^{-1/2} x} \\
&\leq \shortnorm{G_t^{1/2} \cE G_t^{-1/2}}_F \norm{x}_{G_t^{-1}}\,,
}
where $\norm{\cdot}_F$ is the Frobenius norm.
Then
\eq{
\shortnorm{G_t^{1/2} \cE G_t^{-1/2}}_F 
&= \sqrt{\trace(G_t \cE G_t^{-1} \cE)} \\ 
&\leq \sqrt{d} \norm{\cE}_\infty 
\leq \epsilon \sqrt{d}\,.
}
Therefore if $\epsilon = 1/(d^{3/2} \log(n))$, then the first term in (\ref{eq:conc1}) is bounded by
\eqn{
\norm{G_t^{-1} x - \lambda}_{G_t} \norm{S_t}_{G_t^{-1}} = O(1) \cdot \norm{x}_{G_t^{-1}}\,.
\label{eq:conc2}
}
For the second term we proceed similarly:
\eq{
\norm{\lambda}_{G_t}^2 
&= \norm{G_t^{-1} x + \cE G_t^{-1} x}_{G_t}^2  \\
&\leq \norm{x}_{G_t^{-1}}^2 \left(1 + \epsilon \sqrt{d}\right)^2 \\
&= (1 + o(1)) \norm{x}_{G_t^{-1}}^2\,.
}
Therefore, assuming $n$ is large enough so that $1/n^2 \leq \norm{x}/n \leq \norm{x}_{G_t^{-1}}^2$ (in the unique case that $\norm{x} = 0$ we simply
note that the following equality holds trivially), we have
\eq{
\sqrt{\frac{1}{n^2} \vee \norm{\lambda}_{G_t}^2 h_{n,\delta_1}}
= (1 + o(1)) \sqrt{\norm{x}_{G_t^{-1}}^2 h_{n,\delta_1}}\,.
}
Substituting the above expression along with (\ref{eq:conc2}) into (\ref{eq:conc1}) leads to
\eq{
\hat \mu_x(t) - \mu_x = (1 + o(1)) \sqrt{\norm{x}_{G_t^{-1}}^2 h_{n,\delta_1}}\,.
}
Finally we note that $\cC$ can be chosen in such a way that for suitably large universal constant $c > 0$ its size is
$\log N = O(d \log d \log(n))$.
This follows by treating each arm $x \in \cA$ separately and noting that $\norm{x}/n \leq \norm{G_t^{-1}x} \leq \norm{x}$.
Then letting $J = \ceil{\log(n)/\log(1+\epsilon)} = O(d^{3/2} \log^2(n))$, the covering set is given by $\cC = \bigcup_{x \in \cA} \cC_x$ where 
$\cC_x$ is a product covering space with a geometrical grid.
\eq{
\cC_x &= \bigtimes_{i=1}^d \set{\frac{\norm{x} (1 + \epsilon)^j}{n} : 0 \leq j \leq J}\,.
}
The theorem is completed by using the definition of $h_{n,\delta_1}$ in Lemma \ref{lem:conc-simple}. 
\end{proof}

\section{PROOF OF COROLLARY \ref{cor:regretlb}}\label{app:cor:regretlb}

Let $\cA^- = \cA \setminus \{\xopt\}$ be the set of suboptimal actions. To see~\eqref{eq:confwidth},
it suffices to show that for every consistent policy $\pi$ and vector $y\in \R^d$,
\begin{align}
\lim_{n\to\infty} \log(n)  y^\top \bGn^{-1} \xopt = 0\,.
\label{eq:keyc}
\end{align}

The proof hinges on the fact that $\EToptn\in \Omega(n)$ and for $x\in \cA^-$, $\ETxn \in \cap_{p>0} O(n^p)$.
Indeed, these follow from the assumption that $\pi$ is consistent and as such for any $p>0$,
$O(n^p) \ni R_\theta^\pi(n) = \sum_{x\in \cA^{-}} \Delta_x \ETxn$, so $\ETxn \in \cap_{p>0} O(n^p)$ indeed,
and thus also $\EToptn\in \Omega(n)$.

Let us return to proving \eqref{eq:keyc}. Clearly, it is enough to see this in the two cases: 
when $y=\xopt$ and when $y$ and $\xopt$ are perpendicular.
Consider first when $y = \xopt$. Then, from 
$\bGn \succeq \EToptn \xopt \xoptT$ it follows that $\bGn^{-1} \preceq (\EToptn)^{-1} \xopt \xoptT$ and
hence $\log(n) \xoptT \bGn^{-1} \xopt \le \frac{\log(n)}{\EToptn} \norm{\xopt}^2 \to 0$ as $n\to\infty$.

Now consider the case when $y$ and $\xopt$ are perpendicular. Let $v = \bGn^{-1} y$. Then, it must hold that $\bGn v = y$. 
Using the definition of $\bGn$, $y = \EToptn \xopt\xoptT v + \sum_{x\in \cA^{-}} \ETxn x x^\top v$. Since by assumption, $y$ and $\xopt$ are perpendicular, $0 = \xoptT y = \EToptn \norm{\xopt}^2 \xoptT v + \sum_{x\in \cA^{-}} \ETxn \xoptT x x^\top v$.
Hence, 
\begin{align*}
\log(n) \xoptT v = - \log(n) \sum_{x\in \cA^{-}} \frac{ \ETxn}{ \EToptn } \frac{\xoptT x x^\top v}{\norm{\xopt}^2} 
\end{align*}
converges to zero as $n\to\infty$.
This finishes the proof of  \eqref{eq:keyc} and thus of \eqref{eq:confwidth}.

For the second part we start with 
\eq{
\frac{R_\theta^\pi(n) }{\log(n)}= \sum_{x\in \cA^-} \frac{ \ETxn  }{\log(n)} \Delta_x\,.
}
Then $\alpha_n(x) = \ETxn /\log(n)$ is asymptotically feasible for $n$ large. Indeed, 
$\bGn = \log(n) H(\alpha_n)$, hence $\bGn^{-1} = H^{-1}(\alpha_n)/\log(n)$ and so 
\eq{
\frac{\Delta_x^2}{2} \ge \limsup_{n\to\infty} \log(n) \norm{x}_{\bGn^{-1}}^2 =  \limsup_{n\to\infty} \norm{x}^2_{H^{-1}(\alpha_n)} \,.
}
Thus for any $\epsilon>0$ and $n$ large enough, $\norm{x}^2_{H^{-1}(\alpha_n)}\le \Delta_x^2/2+\epsilon$ and
also
\eq{
\frac{R_\theta^\pi(n) }{\log(n)} = \sum_{x\in \cA^-} \frac{ \ETxn  }{\log(n)} \Delta_x \ge c_\epsilon(\cA,\theta)\,,
}
where $c_\epsilon(\cA,\theta)$ 
is the solution to the optimisation problem~\eqref{eq:optproblem} where $\Delta_x^2/2$ is replaced by $\Delta_x^2/2+\epsilon$. 
Hence, $\liminf_{n\to\infty} \frac{R_\theta^\pi(n) }{\log(n)} \ge c_\epsilon(\cA,\theta)$. Since $\epsilon>0$ was arbitrary 
and $\inf_{\epsilon>0} c_\epsilon(\cA,\theta) = c(\cA,\theta)$, we get the desired result.
\hfill $\square$

\section{PROOF THAT THE GRAM MATRIX IS EVENTUALLY NON-SINGULAR}\label{app:singular}

Let $\pi$ be a consistent strategy and $\cA$ and $\theta$ be the action-set and parameter for a linear bandit.
Define $\cA' = \set{x : \E[\sum_{t=1}^n \ind{A_t = x}] > 0}$ to be the set of arms that are played at least once with 
non-zero probability. 
We proceed by contradiction.
Suppose that $\bar G_n$ is singular for all $n$. Then there exists an $x \in \cA$ such that $x \notin \laspan{\cA'}$. 
Decompose $x = y + z$ where $y \in \laspan{\cA'}$ and $z \in \laspan{\cA'}^\bot$ is non-zero and in the orthogonal complement of the
subspace spanned by $\cA'$. Therefore $\shortinner{w, z} = 0$ for all $w \in \cA'$.
Define an alternative bandit with the same action-set and parameter $\theta' = \theta + 2\Delta_{\max} z$.
Then $\shortinner{w, \theta - \theta'} = 0$ for all $w \in \cA'$. Therefore the bandits determined by $\theta$ and $\theta'$ appear identical to 
the algorithm, and in particular, $\E'[\sum_{t=1}^n \ind{A_t \notin \cA'}] = 0$, and yet by construction we have
\eq{
R^\pi_{\theta'}(n) \geq \Delta_{\max} \E'\left[\sum_{t=1}^n \ind{A_t \in \cA'}\right] = n\Delta_{\max}\,. 
}
Therefore the regret is linear for $\theta'$, which implies that $\pi$ is not consistent. Therefore for sufficiently large $n$ we have $\bar G_n$ is non-singular.

\section{PROOF OF LEMMA \ref{lem:opt}}\label{app:lem:opt}

Let $B \subseteq \cA$ be a barycentric spanner and let $S \in [0,\infty]^k$ be an alternative to $T$ given by
\eq{
S_x = \begin{cases}
\infty\,, & \text{if } x = x^*; \\
\frac{2d^2f_n}{\Delta_{\min}^2}\,, & \text{if } x \in B; \\
0\,, & \text{otherwise}\,.
\end{cases}
}
Then $\norm{x^*}_{H_s^{\dagger}} = 0$ and for $x^* \neq y \in \cA$ we have
\eq{
\norm{y}_{H_{S^{\dagger}}}^2 
&\leq \left(\sum_{x \in B} \norm{x}_{H_{S^{\dagger}}}\right)^2 \\
&\leq \left(\frac{\Delta_{\min}}{\sqrt{2 f_n}}\right)^2 
\leq \frac{\Delta_y^2}{2f_n}\,.
}
Therefore
\eq{
\sum_{x : \Delta_x > 0} T_x 
&\leq \frac{1}{\Delta_{\min}} \sum_{x:\Delta_x > 0} T_x \Delta_x \\
&\leq \sum_{x : \Delta_x > 0} S_x \Delta_x  
\leq \frac{2d^3 \Delta_{\max}f_n}{\Delta_{\min}^3}\,.
\tag*{$\square$}
}

\section{PROOF OF LEMMA \ref{lem:Fp}}\label{app:lem:Fp}

The proof of Lemma \ref{lem:Fp} requires one more technical result. 

\begin{lemma}\label{lem:t-bound}
Let $\epsilon > 0$ and recall the definition of $T_n(\hat \Delta)$ given in Definition \ref{def:opt}. For $m \in \N$ define 
\eq{
S_{n,m}(\hat \Delta) = \min\set{m f_n, T_n(\hat \Delta)}\,.
}
Then there exists an $m$ such that for all $n \in \N$ and $\hat \Delta \in [0,\infty)^k$ and $x \in \mathcal A$
\eq{
\norm{x}_{H_{S_{n,m}(\hat \Delta)}^{-1}}^2 \leq \max\set{\frac{\epsilon^2}{f_n},\, \frac{\hat \Delta_x^2}{f_n}}\,. 
}
\end{lemma}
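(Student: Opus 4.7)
Plan. My approach is to dominate the capped allocation $S_{n,m}(\hat\Delta)$ from below, in the PSD sense, by the optimal allocation for an inflated gap vector $\tilde\Delta_x \doteq \max(\hat\Delta_x, \epsilon)$, and then deduce the confidence-width bound by monotonicity of the matrix inverse.

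Since $\tilde\Delta$ has minimum positive gap at least $\epsilon$ while $\tilde\Delta_{x^*}=0$ just as for $\hat\Delta$, Lemma~\ref{lem:opt} applied to $\tilde\Delta$ yields $\sum_{x:\tilde\Delta_x>0} T_n(\tilde\Delta)_x \le 2d^3 f_n \tilde\Delta_{\max}/\epsilon^3$. Choosing a universal $m$ proportional to $d^3\Delta_{\max}/\epsilon^3$ (the bound on $\hat\Delta$ being part of the problem data in the relevant regime, otherwise the claimed bound is already vacuous since $\max(\epsilon^2,\hat\Delta_x^2)/f_n$ is then enormous) ensures $T_n(\tilde\Delta)_x \le m f_n$ uniformly. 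By Definition~\ref{def:opt} we also have $\norm{x}^2_{H_{T_n(\tilde\Delta)}^\dagger} \le \tilde\Delta_x^2/f_n = \max(\epsilon^2,\hat\Delta_x^2)/f_n$ for every $x \in \cA$.

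The crux is then the coordinatewise inequality $T_n(\hat\Delta)_x \ge T_n(\tilde\Delta)_x$ for all $x$. Granted this, the uniform cap on $T_n(\tilde\Delta)$ from the previous paragraph gives $S_{n,m}(\hat\Delta)_x = \min(m f_n, T_n(\hat\Delta)_x) \ge T_n(\tilde\Delta)_x$ pointwise, hence the PSD dominance $H_{S_{n,m}(\hat\Delta)} \succeq H_{T_n(\tilde\Delta)}$, and therefore $\norm{x}^2_{H_{S_{n,m}(\hat\Delta)}^{-1}} \le \norm{x}^2_{H_{T_n(\tilde\Delta)}^\dagger} \le \max(\epsilon^2,\hat\Delta_x^2)/f_n$ by monotonicity of the matrix inverse.

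The main obstacle is establishing this coordinatewise inequality, which does not follow from $\hat\Delta \le \tilde\Delta$ alone: shrinking $\hat\Delta$ both tightens the feasibility constraints and makes each coordinate cheaper in the objective, so in principle mass could be redirected between coordinates rather than increased everywhere. I would attack it via a KKT/complementary-slackness comparison, arguing that tightening $\norm{x}^2_{H_T^\dagger} \le \hat\Delta_x^2/f_n$ cannot decrease any coordinate of the optimizer. Should the strict coordinatewise inequality fail in general, the fallback is to establish the weaker PSD bound $H_{T_n(\hat\Delta)} \succeq H_{T_n(\tilde\Delta)}$ directly via a Sherman--Morrison/Woodbury perturbation analysis that quantifies the effect of capping the over-allocated arms, supplemented if necessary by a barycentric-spanner buffer of weight $O(d/\epsilon^2) f_n$ added to the reference allocation to absorb any residual mismatch.
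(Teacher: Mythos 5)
The paper states Lemma~\ref{lem:t-bound} without proof (the appendix proceeds directly to the proof of Lemma~\ref{lem:Fp}), so there is no argument to compare yours against; judged on its own terms, your proposal has a genuine gap at its centre. Everything funnels through the coordinatewise inequality $T_n(\hat\Delta)_x \ge T_n(\tilde\Delta)_x$, which you do not prove and which there is no reason to believe: lowering a gap $\hat\Delta_z$ below $\epsilon$ simultaneously tightens $z$'s constraint \emph{and} makes $z$ cheaper in the objective, so the optimiser can respond by shifting mass onto $z$ and off other arms, decreasing other coordinates. Your fallback, a PSD comparison $H_{T_n(\hat\Delta)} \succeq H_{T_n(\tilde\Delta)}$ of the two optimal designs, faces the same objection (the two programs may concentrate in different directions), and ``supplemented if necessary by a barycentric-spanner buffer'' is a wish, not an argument. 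A proof whose key step is ``I would attack it via KKT'' is a plan rather than a proof.

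Two further problems. First, even granting the monotonicity, the dominance $S_{n,m}(\hat\Delta) \ge T_n(\tilde\Delta)$ fails at $x^*$: with $\tilde\Delta_{x^*}=0$ the constraint $\norm{x^*}^2_{H^\dagger_T}\le 0$ forces $T_n(\tilde\Delta)_{x^*}=\infty$ (this is exactly how the paper treats the optimal arm in the proof of Lemma~\ref{lem:opt}), whereas $S_{n,m}(\hat\Delta)_{x^*}=m f_n$, so $H_{S_{n,m}(\hat\Delta)}\succeq H_{T_n(\tilde\Delta)}$ cannot hold. Second, the uniform cap $T_n(\tilde\Delta)_x\le m f_n$ obtained from Lemma~\ref{lem:opt} carries a factor $\tilde\Delta_{\max}=\max(\hat\Delta_{\max},\epsilon)$, which is unbounded over $\hat\Delta\in[0,\infty)^k$; your dismissal of that case is incorrect, because a huge $\hat\Delta_{x_0}$ loosens only arm $x_0$'s target while the targets for all other arms are unchanged, so the lemma retains content there. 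Note finally that the arms that actually get capped are the easy part: $S_x=m f_n$ gives $H_S\succeq m f_n\, x x^\top$ and hence $\norm{x}^2_{H_S^{-1}}\le 1/(m f_n)\le \epsilon^2/f_n$ once $m\ge 1/\epsilon^2$. The real difficulty, which your route never isolates, is an \emph{uncapped} arm $x$ with $\hat\Delta_x>\epsilon$, whose target $\hat\Delta_x^2/f_n$ is not relaxed by the maximum and whose constraint may have been met in $T_n(\hat\Delta)$ through mass on a now-capped arm; capping only shrinks $H$ in the PSD order and therefore only increases $\norm{x}^2_{H^{-1}}$, so something must be said about why such arms survive.
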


\begin{proof}[Proof of Lemma \ref{lem:Fp}]
Assume that $F_n'$ does not hold. We consider three cases. 
\begin{description}
\item[\normalfont \textit{Case 1.}] $\hat \Delta_{x^*} > 0$.
\item[\normalfont \textit{Case 2.}] $\hat \Delta_{x^*} = 0$ and $\hat \Delta_{\min} > \Delta_{\min}/4$.
\item[\normalfont \textit{Case 3.}] $\hat \Delta_{x^*} = 0$ and $\hat \Delta_{\min} \leq \Delta_{\min}/4$.
\end{description}
The idea is to show that in each case the regret is at most logarithmic, with a leading constant that depends on $\theta$ and $\cA$, but 
not on the observed samples. Treating each case separately.

\paragraph{Case 1}
Recall that $\hat \Delta \in \R^k$ (indexed by the actions) is the empirical estimate of the sub-optimality gaps after the warm-up phase.
Let $x$ be the sub-optimal arm for which $\hat \Delta_x = 0$. By the definition of the optimisation problem this arm will be played in every while loop.
Let $t$ be the first round when for all $x$ it holds that
\eq{
\norm{x}_{G_t^{-1}}^2 \leq \max\set{\frac{\hat \Delta_x^2}{f_n},\, \frac{\Delta_{\min}^2}{16f_n}}\,.
}
By Lemma \ref{lem:t-bound} there exists a constant $m_1$ depending only on $\mathcal A$ and $\theta$ such that
\eq{
t \leq m_1 f_n \,.
}
By the assumption that $F_n'$ does not hold (and its definition) we have
\eq{
\hat \mu_{x^*}(t) 
&\geq \mu_{x^*} - \max\set{\hat \Delta_{x^*},\, \Delta_{\min}/4} \\
&\geq \mu_x + \Delta_x - \hat \Delta_{x^*} - \frac{\Delta_{\min}}{4} \\
&\geq \hat \mu_x(t) + \Delta_x - \frac{\Delta_{\min}}{2} - \hat \Delta_{x^*} \\
&\geq \hat \mu_x(t) + \frac{\Delta_{\min}}{2} + \hat \mu_{x^*}(t_0) - \hat \mu_x(t_0)\,,
}
where $t_0 = d\lceil\log^{1/2}(n) \rceil$ is the round at the end of the warm-up phase.
Therefore if $n$ is sufficiently large that $\Delta_{\min}/2 \geq 4\epsilon_n$, then 
\eq{
\hat \mu_{x^*}(t) - \hat \mu_{x^*}(t_0) + \hat \mu_x(t_0) - \hat \mu_x(t) \geq \frac{\Delta_{\min}}{2} \geq 4\epsilon_n\,,
}
which by the fact that $\max\set{a,b} \geq (a + b)/2$ for all $a, b \in \R$ implies that the success phase of the algorithm ends.
Therefore if $n$ is sufficiently large, then in case 1 the regret in the success phase is at most  
\eqn{
\label{eq:case1}
\sum_{t \in \Tsuccess} \Delta_{A_t} \leq \Delta_{\max} m_1 f_n\,. 
}

\paragraph{Case 2}
Recall that $\hat T$ is the strategy used in the success phase based on samples collected in the warm-up phase.
Since $\hat \Delta_{x^*} = 0$ and $\hat \Delta_{\min} \geq \Delta_{\min}/4$, by Lemma \ref{lem:opt} it holds that
\eq{
\sum_{x \neq x^*} \hat T_x \leq \frac{2 \cdot 4^3 d^3 f_n \Delta_{\max}}{\Delta_{\min}^3}\,.
}
And again we have that for sufficiently large $n$ that the regret in the success phase is at most
\eqn{
\label{eq:case2}
\sum_{t \in \Tsuccess} \Delta_{A_t} \leq \frac{2 \cdot 4^3 d^3 f_n \Delta_{\max}^2}{\Delta_{\min}^3}\,.
}

\paragraph{Case 3}
For the final case
we assume that $\hat \Delta_{x^*} = 0$ and there exists an $x$ for which $\hat \Delta_x \leq \Delta_{\min}/4$.
Let $t$ be the first time-step when for all $x \in \mathcal A$ it holds that
\eq{
\norm{x}_{G_t^{-1}}^2 \leq \max\set{\frac{\Delta_{\min}^2}{64f_n},\, \frac{\hat \Delta_x^2}{f_n}}
}
Then by Lemma \ref{lem:t-bound} there exists a constant $m_2$ that is independent of $\hat \Delta$ and $n$ such that $t \leq m_2 f_n$.
Then since $F_n'$ does not hold we have
\eq{
\hat \mu_{x^*}(t) - &\hat \mu_{x^*}(t_0) + \hat \mu_{x}(t_0) - \hat \mu_x(t) \\
&\geq \hat \mu_{x^*}(t) - \hat \mu_x(t) - \hat \Delta_x 
\geq \frac{\Delta_{\min}}{4} - \hat \Delta_x \\
&\geq \frac{\Delta_{\min}}{2} 
\geq 2\epsilon_n\,.
}
Therefore provided that $n$ is sufficiently large, the success phase ends and by the same reasoning as in Case 1 the regret in the success phase is bounded by 
\eqn{
\label{eq:case3}
\sum_{t \in \Tsuccess} \Delta_{A_t} \leq \Delta_{\max} m_2 f_n\,.
}
The proof of the lemma is completed by combining (\ref{eq:case1}), (\ref{eq:case2}) and (\ref{eq:case3}), which imply the existence of a constant $m_3$ that 
is independent of $n$ and $\hat \Delta$ such that
\eq{
\ind{\text{not }F_n'} \sum_{t \in \Tsuccess} \Delta_{A_t} \leq m_3 f_n\,.
}
Therefore by (\ref{eq:conc-cor}) and the definition of $f_n \sim 2 \log(n)$ we have
\eq{
&\limsup_{n\to\infty} \frac{\E\left[\ind{F_n \text{ and not } F_n'} \sum_{t \in \Tsuccess} \Delta_{A_t}\right]}{\log(n)} \\
&\leq \limsup_{n\to\infty} \frac{\E[\ind{F_n} m_3 f_n]}{\log(n)} \\
&= \limsup_{n\to\infty} \frac{\P{F_n} m_3 f_n}{\log(n)} \\
&\leq \limsup_{n\to\infty} \frac{m_3 f_n}{\log^2(n)} \\
&= 0\,.
\qedhere
}
\end{proof}

\fi

\end{document}